% This file contains the arXiv version of the finally accepted paper.
% The date is 10.06.17.

\documentclass[11pt]{article}

\setlength{\textheight}{225mm}
\setlength{\textwidth}{165mm}
\setlength{\topmargin}{-5mm}
\setlength{\oddsidemargin}{-5mm}

\usepackage{amssymb,amsmath}
\usepackage{color}
\usepackage{graphicx}
  % \graphicspath{{./Figures/}}
  \DeclareGraphicsExtensions{.jpeg,.png,.jpg,.eps}
\usepackage{tikz}
\usepackage{epstopdf}

%\usepackage[affil-it]{authblk}

% This file contains the notation for the book on hidden Markov processes.
% Latest update on 15.08.09, rearranging, grouping, and getting rid of
% unnecessary definitions.

% Bold face Roman symbols

\def\eb{{\bf e}}

% Calligraphic symbols

\def\G{{\cal G}}

\def\N{{\cal N}}

% Blackboard style symbols

\def\R{{\mathbb R}}

% Abbreviations for greek letters

\def\al{\alpha}
\def\d{\delta}
\def\D{\Delta}
\def\e{\epsilon}
\def\g{\gamma}

\def\l{\lambda}
\def\L{\Lambda}

\def\r{\rho}
\def\s{\sigma}
\def\SI{\Sigma}
\def\t{\tau}
\def\th{\theta}

% Bold greek letters

\def\bbeta{{\boldsymbol \beta}}

% Bars, hats, tildes etc.

% Approaching infinity

% Miscellaneous mathematical symbols

\def\ap{\rightarrow}

\def\seq{\subseteq}

\def\bz{{\bf 0}}

\def\fa{\; \forall}

\def\st{\mbox{ s.t. }}
\def\sg{\mbox{sign}}
\def\nm{\Vert}

\renewcommand{\iff}{\mbox{$\; \; \Longleftrightarrow \; \;$}}
\renewcommand{\and}{\mbox{$\wedge$}}

% Typesetting commands

\newcommand{\bc}{\begin{center}}
\newcommand{\ec}{\end{center}}
\newcommand{\be}{\begin{equation}}
\newcommand{\ee}{\end{equation}}
\newcommand{\bd}{\begin{displaymath}}
\newcommand{\ed}{\end{displaymath}}
\newcommand{\ba}{\begin{array}}
\newcommand{\ea}{\end{array}}
\newcommand{\ben}{\begin{enumerate}}
\newcommand{\een}{\end{enumerate}}
\newcommand{\bit}{\begin{itemize}}
\newcommand{\eit}{\end{itemize}}
\newcommand{\beq}{\begin{eqnarray}}
\newcommand{\eeq}{\end{eqnarray}}
\newcommand{\btab}{\begin{tabular}}
\newcommand{\etab}{\end{tabular}}
\newcommand{\bfig}{\begin{figure}}
\newcommand{\efig}{\end{figure}}
\newcommand{\btp}{\begin{tikzpicture}}
\newcommand{\etp}{\end{tikzpicture}}

% Environment commands

{\bf}{\it}
\newtheorem{definition}{Definition}{\bf}{\it}
{\bf}{\rm}
{\bf}{\it}
\newtheorem{theorem}{Theorem}{\bf}{\it}
{\bf}{\it}
{\bf}{\it}
{\bf}{\rm}
{\bf}{\rm}

% Additional commands for compressed sensing papers

\newcommand{\argmin}{\operatornamewithlimits{argmin}}

% \DeclareMathOperator*{\argmin}{arg\,min}
% \DeclareMathOperator*{\argmax}{arg\,max}

% Norm symbols

\newcommand{\nmm}[1]{ \nm #1 \nm }
\newcommand{\nmeu}[1]{ \nm #1 \nm_2 }
\newcommand{\nmeusq}[1]{ \nm #1 \nm_2^2 }

\newcommand{\nmp}[1]{ \nm #1 \nm_p }

% Inner product symbol

\newcommand{\halmos}{\hfill $\Box$}

\newcommand{\supp}{\mbox{supp}}

\def\xh{\hat{x}}

\def\hl{h_{\L}}
\def\hlc{h_{\L^c}}
\def\hlo{h_{\L_0}}
\def\hloc{h_{\L_0^c}}

\def\nmsl1{\nm_{{\rm SL1}}}

% Special symbols for this paper.

%\newcommand{\supp}{\mbox{supp}}

%\newcommand{\qed}{\hfill \mbox{\raggedright \rule{.07in}{.1in}}}

\def\xh{\hat{x}}

\def\hl{h_{\L}}
\def\hlc{h_{\L^c}}
\def\hlo{h_{\L_0}}
\def\hloc{h_{\L_0^c}}

\def\Rcal{{\cal R}}

\newcommand{\nmcmu}[1]{ \nm #1 \nm_{C,\mu} }
\newcommand{\nmgmu}[1]{ \nm #1 \nm_{{\rm SGL},\mu} }

\def\rbar{\bar{\rho}}

% \jmlrheading{1}{2000}{1-48}{10/29}{10/00}{author list}

% \ShortHeadings{CLOT Formulation}{Ahsen and Vidyasagar}

% \ShortHeadings{Two New Approaches to Sparse Regression}
% {Ahsen, Challapalli and Vidyasagar}

%\newcommand{\argmin}{\operatornamewithlimits{argmin}}

\begin{document}

% \title{SGL and CLOT Algorithms Exhibit Both 

% \title{A New Family of Algorithms that Exhibit Both
% Near-Ideal Behavior and the Grouping Effect}

\title{
Two New Approaches to Compressed Sensing \\
Exhibiting Both Robust Sparse Recovery \\
and the Grouping Effect
}

\author{Mehmet Eren Ahsen, Niharika Challapalli and
Mathukumalli Vidyasagar
\thanks{This research was supported by the National Science Foundation under
Award \#ECCS-1306630 and the Cecil \& Ida Green Endowment at UT Dallas.
MEA is with the Mount Sinai School of Medicine, New York;
mehmeteren.ahsen@mssm.edu.
Niharika Challapalli is a Ph.D.\ student in the Electrical
Engineering Department at the University of Texas at Dallas;
niharikaiitbbs@gmail.com.
Mathukumalli Vidyasagar is with the Systems Engineering Department
at the University of Texas at Dallas, and the Indian Institute of 
Technology Hyderabad;
m.vidyasagar@utdallas.edu and m.vidyasagar@iith.ac.in.}
}

%  \author{\name Mehmet Eren Ahsen \email mahsen@us.ibm.com\\
%   \addr IBM Research\\
%      1101 Route 134 Kitchawan Rd, \\
%            Yorktown Heights, NY 10598 
% \AND
% \name Niharika Challapalli \email niharika15c@gmail.com\\
% \addr Department of Electrical Engineering\\
%  University of Texas at Dallas\\
%            Richardson, TX 75080, USA
%   \AND
%   \name Mathukumalli Vidyasagar
%   \email m.vidyasagar@utdallas.edu,m.vidyasagar@iith.ac.in\\
%   \addr Department of Systems Engineering\\
%      University of Texas at Dallas\\
%            Richardson, TX 75080, USA, and\\
% Department of Electrical Engineering\\
% Indian Institute of Technology Hyderabad\\
% Kandi, Telangana, India 502285}

%\author{Mehmet Eren Ahsen and Mathukumalli Vidyasagar
%\thanks{Erik Jonsson School of Engineering and Computer Science,
%University of Texas at Dallas, 800 W.\ Campbell Road, Richardson, TX 75080.
%Email: \{mahsen@us.ibm.com\}@utdallas.edu.
%This research was supported by the National Science Foundation under
%Award \#ECCS-1306630 and the Cecil \& Ida Green Endowment at UT Dallas.
%}
% }
 % \editor{Saharon Rosset}

\maketitle

\begin{abstract}

In this paper we introduce a new optimization formulation for
sparse regression and compressed sensing, called
CLOT (Combined L-One and Two), wherein the regularizer is a convex
combination of the $\ell_1$- and $\ell_2$-norms.
This formulation differs from the Elastic Net (EN) formulation, in which the
regularizer is a convex combination of the $\ell_1$- and $\ell_2$-norm 
\textit{squared.}
% This seemingly simple modification has fairly significant consequences.
% In particular, 
It is shown that, in the context of compressed sensing, the EN formulation
\textit{does not achieve} robust recovery of sparse vectors,
whereas the new CLOT formulation achieves robust recovery.
Also, like EN but unlike LASSO, the CLOT formulation achieves the
grouping effect, wherein coefficients of highly correlated columns of
the measurement (or design) matrix are assigned roughly comparable values.
It is already known LASSO does not have the grouping effect.
% It is noteworthy that LASSO does not have the grouping effect and EN
% (as shown here) does not achieve robust sparse recovery.
Therefore the CLOT formulation combines the best features of both LASSO
(robust sparse recovery) and EN (grouping effect).

The CLOT formulation is a special case of another one called SGL
(Sparse Group LASSO) which was introduced into the literature
previously, but without any analysis
of either the grouping effect or robust sparse recovery.
It is shown here that SGL achieves robust sparse recovery,
and also achieves a version of the grouping effect in that
coefficients of highly correlated columns belonging to the same group of
the measurement (or design) matrix are assigned roughly comparable values.

\end{abstract}

% \keywords{Regression, Compressed Sensing, LASSO, Sparse Group Lasso,
% Elastic Net}

\noindent \textbf{Keywords:} Sparse regression, compressed sensing,
LASSO, Sparse Group LASSO, Elastic Net

\section{Introduction}\label{sec:intro}

The LASSO and the Elastic Net (EN) formulations are among the most
popular approaches for sparse regression and compressed sensing.
In this section, we briefly review these two problems and their
current status, so as to provide the background for the remainder of the paper.

\subsection{Sparse Regression}\label{ssec:sparse-reg}

In sparse regression, one is given a measurement matrix 
(also called a design matrix in statistics) $A \in \R^{m \times n}$
where $m \ll n$, together with a measurement or measured vector $y \in \R^m$.
The objective is to choose a vector $x \in \R^n$ such that $x$ is rather
sparse, and $Ax$ is either exactly or approximately equal to $y$.
The problem of finding the most sparse $x$ that satisfies $Ax = y$ 
is known to be NP-hard \cite{Natarajan95}; therefore it is necessary
to find alternate approaches.

For the sparse regression problem, the general approach is to determine
the estimate $\xh$ by solving the minimization problem
\be\label{eq:100}
\xh = \argmin_z \nmeusq{y - Az} \st \Rcal(z) \leq \g ,
\ee
or in Lagrangian form,
% so as to minimize the augmented objective function
\be\label{eq:101}
\xh = \argmin_z \nmeusq { y - Az } + \l \Rcal (z) ,
\ee
where $\Rcal : \R^n \ap \R$ is known as a ``regularizer,'' and $\g,\l$
are adjustable parameters.
Different choices of the regularizer lead to different approaches.
With the choice $\Rcal_{{\rm ridge}} (z) = \nmeusq{z}$, the approach is known
as ridge regression \cite{Ridge}, which builds on earlier work
\cite{Tikhonov43}.
The LASSO approach \cite{Tibshirani-Lasso} results from choosing
$\Rcal_{{\rm LASSO}}(z) = \nmm{z}_1$, while the Elastic Net (EN) approach
\cite{Zou-Hastie05} results from choosing 
\be\label{eq:102a}
\Rcal_{{\rm EN}}(z) = \al_1 \nmm{z}_1 + \al_2 \nmeusq{z} ,
\ee
where $\al_1, \al_2$ are adjustable parameters.
For later use, we redefine the EN regularizer as
\be\label{eq:102}
\Rcal_{{\rm EN}}(z) = (1 - \mu) \nmm{z}_1 + \mu \nmeusq{z} ,
\ee
where
\bd
\mu = \frac{\al_2}{\al_1 + \al_2} \in [0,1]
\ed
% where $\mu \in (0,1)$
is an adjustable parameter, and the constant $\al_1 + \al_2$ can be
absorbed into the Lagrange multiplier $\l$ in \eqref{eq:101}.
Note that the EN regularizer function interpolates ridge regression and
LASSO, in the sense that EN reduces to LASSO if $\mu = 0$ and to ridge
regression if $\mu = 1$.
A very general approach to regression using a convex regularizer is
given in \cite{NRWY12}.

The LASSO approach can be shown to return
a solution $\xh$ with no more than $m$ nonzero components, under
mild regularity conditions; 
see \cite{Osborne-Presnell-Turlach00}.
There is no such bound on the number of components of $\xh$ when EN is used.
However, when the columns of the matrix $A$ are highly correlated,
then LASSO chooses just one of these columns and ignores the rest.
Measurement matrices with highly correlated columns occur in many
practical situations, for example, in microarray measurements
of messenger RNA, otherwise known as gene expression data.
The EN approach was proposed at least in part to overcome this
undesirable behavior of the LASSO formulation.
It is shown in \cite[Theorem 1]{Zou-Hastie05} that if two columns
(say $i$ and $j$) of the matrix $A$ are highly correlated, then
the corresponding components $\xh_i$ and $\xh_j$ of the EN solution
are nearly equal.
This is known as the ``grouping effect,'' and the point is that EN
demonstrates the grouping effect whereas LASSO does not.

\subsection{Compressed Sensing}\label{ssec:comp-sens}

In compressed sensing, the objective is to \textit{choose} the
measurement matrix $A$ (which is part of the data in sparse regression),
such that whenever the vector $x$ is
nearly sparse, it is possible to nearly recover $x$ from noise-corrupted
measurements of the form $y = Ax + \eta$.
Let us make the problem formulation precise.
For this purpose we begin by introducing some notation.

Throughout, the symbol $[n]$ denotes the index set $\{ 1 , \ldots , n \}$.
The \textbf{support} of a vector $x \in \R^n$ is denoted by
$\supp(x)$ and is defined as
\bd
\supp(x) := \{ i \in [n] : x_i \neq 0 \} .
\ed
A vector $x \in \R^n$ is said to be \textbf{$k$-sparse} if
$| \supp(x) | \leq k$.
The set of all $k$-sparse vectors is denoted by $\SI_k$.
The \textbf{$k$-sparsity index} of a vector $x$ with respect to
a given norm $\nmm{\cdot}$ is defined as
\be\label{eq:103}
\s_k(x,\nmm{\cdot}) := \min_{z \in \SI_k} \nmm {x - z} .
\ee
It is obvious that $x \in \SI_K$ if and only if $\s_k(x,\nmm{\cdot}) = 0$
for every norm.

The general formulation of the compressed sensing problem given below
is essentially taken from \cite{Cohen-Dahmen-Devore09}.
Suppose that $A \in \R^{m \times n}$ is the ``measurement matrix,''
and $\D: \R^m \ap \R^n$ is the ``decoder map,''
where $m \ll n$.
% The decoder map can and usually does depend on the matrix $A$,
% and it is usually the case that $m \ll n$.
Suppose $x \in \R^n$ is an unknown vector that is to be recovered.
The input to the decoder consists of 
$y = Ax + \eta$ where $\eta$ denotes the measurement noise, and
a prior upper bound in the form $\nmeu{\eta} \leq \e$ is available;
in other words, $\e$ is a known number.
In this set-up, the vector $\xh = \D(y)$ is the approximation to the
original vector $x$.
With these conventions, we can now state the following.

\begin{definition}\label{def:near}
% The formulation described in (\ref{eq:22}) or (\ref{eq:22a}) for estimating
% $x$ is said to achieve \textbf{robust $k$-sparse recovery} if
% be \textbf{near ideal} or to show \textbf{near-ideal behavior}, if
Suppose $p \in [1,2]$.
The pair $(A,\D)$ is said to achieve \textbf{robust sparse recovery of
order $k$} with respect to $\nmp{\cdot}$ if there exist constants
$C$ and $D$ that might depend on $A$ and $\D$ but not on
$x$ or $\eta$, such that
\be\label{eq:104}
\nmp{ \xh - x } \leq \frac{1}{ k^{1 - 1/p} }
[ C \s_{k} ( x , \nm{\cdot}\nm_1 ) + D \e ] .
\ee
\end{definition}

The restriction that $p \in [1,2]$ is tied up with the fact that
the bound on the noise is for the Euclidean norm $\nmeu{\eta}$.
The usual choices for $p$ in \eqref{eq:104} are $p = 1$ and $p = 2$.

Among the most popular approaches to compressed sensing is $\ell_1$-norm
minimization, which was popularized in a series of papers, of which
we cite only \cite{Candes-Tao05,CRT06b,Candes08,Donoho06b}.
The survey paper \cite{DDEK12} has an extensive bibliography on the topic,
as does the recent book \cite{FR13}.
In this approach, the estimate $\xh$ is defined as
\be\label{eq:105}
\xh := \argmin_z \nmm{z}_1 \st \nmeu{Az-y} \leq \e .
\ee
Note that the above definition does indeed define a decoder map
$\D: \R^m \ap \R^n$.
In order for the above pair $(A,\D)$ to achieve robust sparse recovery,
the matrix $A$ is chosen so as to satisfy a condition defined next.

\begin{definition}\label{def:RIP}
A matrix $A \in \R^{m \times n}$ is said to satisfy
the \textbf{Restricted Isometry Property (RIP)}
of order $k$ with constant $\d_k$ if
\be\label{eq:105a}
(1 - \d_k) \nmeusq{u} \leq
\nmeusq { Au } \leq (1 + \d_k) \nmeusq{u},  \fa u \in \SI_k .
\ee
\end{definition}

Starting with \cite{Candes-Tao05}, several papers have derived sufficient
conditions that the RIP constant of the matrix $A$ must satisfy in order
for $\ell_1$-norm minimization to achieve robust sparse recovery.
Recently, the ``best possible'' bound has been proved in \cite{CZ14}.
These results are stated here for the convenience of the reader.

\begin{theorem}\label{thm:CZ1}
(See \cite[Theorem 2.1]{CZ14})
Suppose $A$ satisfies the RIP of order $tk$ for some number $t \geq 4/3$
such that $tk$ is an integer, with $\d_{tk} < \sqrt{(t-1)/t}$.
Then the recovery procedure in \eqref{eq:105} achieves robust sparse
recovery of order $k$.
\end{theorem}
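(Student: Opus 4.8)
\medskip
\noindent\textbf{Proof proposal.}\quad
The plan is to run the standard ``feasibility plus cone constraint plus RIP'' argument for $\ell_1$ minimization, but with the Cai--Zhang \emph{sparse representation of a polytope} from \cite{CZ14} in place of the classical decomposition of the tail into consecutive blocks of size $k$; that substitution is exactly what yields the sharp threshold $\sqrt{(t-1)/t}$. Write $h := \xh - x$ and $\eta_0 := \s_k(x,\nmm{\cdot}_1)$. Since $\nmeu{Ax - y} = \nmeu{\eta} \le \e$, the true vector $x$ is feasible for \eqref{eq:105}, so optimality of $\xh$ gives $\nmm{\xh}_1 \le \nmm{x}_1$. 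Taking $S_0 \subseteq [n]$ to be the set of indices of the $k$ largest-magnitude entries of $x$ (so that $\nmm{x_{S_0^c}}_1 = \eta_0$) and expanding $\nmm{x + h}_1 \le \nmm{x}_1$ coordinatewise --- reverse triangle inequality on $S_0$, triangle inequality on $S_0^c$ --- gives the tube constraint $\nmm{\hsoc}_1 \le \nmm{\hso}_1 + 2\eta_0$. Also $\nmeu{Ah} \le \nmeu{A\xh - y} + \nmeu{Ax - y} \le 2\e$. It remains to turn these two facts into a bound on $\nmeu{h}$ and $\nmm{h}_1$ of the form required by \eqref{eq:104}.

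The crucial step is to bring in RIP of order exactly $tk$ --- the only RIP hypothesis available --- without the slack produced by cutting $\hsoc$ into $k$-blocks. After a routine normalization (and, if necessary, a standard preliminary reduction to the case where $\hsoc$ is sufficiently spread out), the scaled tail lies in a polytope of the form $\{v : \nmm{v}_\infty \le \alpha,\; \nmm{v}_1 \le (t-1)k\,\alpha\}$ with $\alpha \approx \nmm{\hsoc}_1/((t-1)k)$, and the sparse-representation lemma then writes $\hsoc = \sum_i \l_i u_i$ as a convex combination ($\l_i \ge 0$, $\sum_i \l_i = 1$) in which each $u_i$ is supported in $S_0^c$, is $(t-1)k$-sparse (an integer sparsity level, as $tk$ is an integer), has $\nmm{u_i}_1 = \nmm{\hsoc}_1$, and --- from the explicit form of the $u_i$ --- satisfies $\nmeu{u_i} \le \nmm{\hsoc}_1/\sqrt{(t-1)k}$. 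The hypothesis $t \ge 4/3$ is what keeps $(t-1)k$ large enough relative to $k$ for this decomposition to be consistent with the tube constraint. Since $|S_0| = k$, every vector $\hso + u_i$ and $\hso - u_i$ is then $tk$-sparse, so \eqref{eq:105a} with parameter $tk$ applies to each of them.

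Finally, expand $\nmeusq{Ah} = \nmeusq{A\hso} + 2\IP{A\hso}{A\hsoc} + \nmeusq{A\hsoc}$, substitute $\hsoc = \sum_i \l_i u_i$, write $A\hso = A(\hso + u_i) - Au_i$, and apply the polarization identity $\IP{A\hso}{Au_i} = \tfrac14\bigl(\nmeusq{A(\hso + u_i)} - \nmeusq{A(\hso - u_i)}\bigr)$ together with the disjointness of $\supp(\hso)$ and $\supp(u_i)$, so that every term becomes a $\d_{tk}$-controlled multiple of the squared Euclidean norm of a $tk$-sparse vector; inserting \eqref{eq:105a} throughout and using $\nmeu{Ah} \le 2\e$ leads to a quadratic inequality of the shape $a\,\nmeusq{\hso} \le b\,\nmeu{\hso}\,W + c\,W^2$, with $W := \e + \eta_0/\sqrt{k}$ and coefficients $a,b,c$ depending only on $\d_{tk}$ and $t$, where after optimizing a free parameter in the decomposition one finds $a > 0$ precisely in the range $\d_{tk} < \sqrt{(t-1)/t}$. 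Solving the quadratic gives $\nmeu{\hso} \le C_0'\,\eta_0/\sqrt{k} + C_1'\,\e$; feeding this back through the tube constraint and $\nmeu{\hsoc} \le \sum_i \l_i \nmeu{u_i} \le \nmm{\hsoc}_1/\sqrt{(t-1)k}$ bounds $\nmeu{h} \le \nmeu{\hso} + \nmeu{\hsoc}$, while $\nmm{h}_1 \le 2\nmm{\hso}_1 + 2\eta_0 \le 2\sqrt{k}\,\nmeu{\hso} + 2\eta_0$ bounds the $\ell_1$ error; the intermediate cases $p \in (1,2)$ follow from the log-convexity inequality $\nmp{h} \le \nmm{h}_1^{2/p-1}\,\nmeu{h}^{2-2/p}$. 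All three are of the form \eqref{eq:104}. The one genuinely delicate point is this last step: the threshold $\sqrt{(t-1)/t}$ emerges only when the coefficients $a,b,c$ are tracked exactly, and it is the polytope decomposition --- yielding pieces of sparsity precisely $(t-1)k$ rather than a union of $k$-blocks --- that removes the slack of the classical estimate and makes the condition sharp.
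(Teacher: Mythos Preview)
The paper does not supply its own proof of Theorem~\ref{thm:CZ1}: the theorem is quoted from \cite[Theorem 2.1]{CZ14} as background, with the parenthetical ``(See \cite[Theorem 2.1]{CZ14})'' serving in lieu of an argument. So there is nothing in the paper to compare your proposal against; the authors simply invoke the Cai--Zhang result and move on.

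That said, your sketch is a recognizable outline of the Cai--Zhang argument itself --- cone constraint from optimality, $\nmeu{Ah} \le 2\e$ from feasibility, the sparse-representation-of-a-polytope lemma to write the tail as a convex combination of $(t-1)k$-sparse pieces, polarization plus RIP of order $tk$ on the resulting $tk$-sparse sums, and a quadratic in $\nmeu{\hso}$ whose leading coefficient stays positive exactly when $\d_{tk} < \sqrt{(t-1)/t}$. One place where your write-up is genuinely hand-wavy is the sentence ``after a routine normalization (and, if necessary, a standard preliminary reduction \dots)'': the polytope lemma needs an $\ell_\infty$ bound on the tail, and since $S_0$ indexes the large entries of $x$ rather than of $h$, the entries of $\hsoc$ are not a priori small. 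In the actual Cai--Zhang proof this is handled by a careful choice of the scale $\alpha$ together with a separate treatment of the finitely many overshooting coordinates (or, equivalently, by first passing to the index set of the $k$ largest entries of $h$ and absorbing the discrepancy into the $\eta_0$ term). This is not a fatal gap --- the fix is standard --- but it is precisely the step you have swept under ``routine,'' and it is where the constraint $t \ge 4/3$ actually enters, not merely to make $(t-1)k$ ``large enough relative to $k$'' as you wrote.
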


\begin{theorem}\label{thm:CZ2}
(See \cite[Theorem 2.2]{CZ14})
Let $t \geq 4/3$.
For all $\g > 0$ and all $k \geq 5/\g$, there exists a matrix $A$
satisfying the RIP of order $tk$ with constant $\d_{tk} \leq
\sqrt{(t-1)/t} + \g$ such that the recovery procedure in \eqref{eq:105}
fails for some $k$-sparse vector.
\end{theorem}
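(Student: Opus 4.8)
The plan is to construct, for each $\g>0$ and each $k\ge5/\g$, a single matrix $A$ meeting the claimed RIP bound together with a $k$-sparse vector that the decoder \eqref{eq:105} (with $\e=0$) fails to recover exactly. This is enough: a $k$-sparse $x$ has $\s_k(x,\nmm{\cdot}_1)=0$, so with $\e=0$ the inequality \eqref{eq:104} would force $\xh=x$, and hence a single exact-recovery failure destroys robust sparse recovery of order $k$. By the null space property, exact $\ell_1$-recovery of every $k$-sparse vector holds \emph{iff} $\nmm{h_S}_1<\nmm{h_{S^c}}_1$ for every nonzero $h\in\ker A$ and every $S$ with $|S|=k$; so it suffices to produce $A$ with RIP of order $tk$ and constant at most $\sqrt{(t-1)/t}+\g$, together with a nonzero $v\in\ker A$ such that $\nmm{v_S}_1\ge\nmm{v_{S^c}}_1$, where $S$ indexes the $k$ largest entries of $v$. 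Indeed, for such a $v$ the vector $x:=v_S$ is $k$-sparse, while $z:=-v_{S^c}=x-v$ obeys $Az=Ax$ and $\nmm{z}_1=\nmm{v_{S^c}}_1\le\nmm{v_S}_1=\nmm{x}_1$, so $x$ is not the $\ell_1$-minimiser (and not a minimiser at all once the inequality is made strict).

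The vector I would use is \emph{two-level}. Take $v\in\R^n$ supported on its first $n:=k+m$ coordinates, with $v_i=h:=(t-1)+\sqrt{t(t-1)}$ for $i\le k$ and $v_i=1$ for $k<i\le n$, where $m:=\lfloor kh\rfloor$. Then $S=\{1,\dots,k\}$, $\nmm{v_S}_1=kh$ and $\nmm{v_{S^c}}_1=m\le kh$ (strictly, unless $kh\in\Z$, a case handled by lowering the lower level to $1-\e'$ for a tiny $\e'>0$); since $n>tk$, $v$ is not $tk$-sparse. For the matrix, let $\Pi:=I-vv^t/\nmeusq{v}$ be the orthogonal projection onto $v^\perp$ and set $A:=c\,\Pi$ for a scalar $c>0$ to be chosen; factoring $\Pi=QQ^t$ with $Q\in\R^{n\times(n-1)}$ having orthonormal columns, the matrix $A:=cQ^t$ is $(n-1)\times n$, satisfies $Av=0$, and $\nmeusq{Au}=c^2\nmeusq{\Pi u}$ for all $u$. (If one also wants $m\ll n$, replace $\Pi$ by $A_0\Pi$ where $A_0$ has a small RIP constant of order $(t+1+h)k$; since this $v$ is only $O(k)$-sparse, $\Pi u$ is $O(k)$-sparse whenever $u$ is $tk$-sparse, so the substitution perturbs the estimates below by an amount one can make negligible.)

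It remains to compute $\d_{tk}(A)$. For a $tk$-sparse unit vector $u$ supported on $T$ one has $\nmeusq{\Pi u}=1-\IP{v_T}{u}^2/\nmeusq{v}$, which ranges over $[\,1-\nmeusq{v_T}/\nmeusq{v},\,1\,]$ as $u$ varies in $T$; the smallest value over all $|T|=tk$ is attained when $T$ is the index set of the $tk$ largest entries of $v$, i.e. $1-r$ with $r:=(kh^2+(t-1)k)/(kh^2+m)$. Hence $\nmeusq{Au}/\nmeusq{u}\in[c^2(1-r),\,c^2]$ over all nonzero $tk$-sparse $u$, and the choice $c^2=2/(2-r)$ centres this interval about $1$, giving $\d_{tk}(A)=r/(2-r)$. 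The parameters $h$ and $m$ were chosen precisely so that in the idealised case $m=kh$ a short simplification yields $r=2\sqrt{t-1}/(\sqrt{t-1}+\sqrt t)$, whence $r/(2-r)=\sqrt{(t-1)/t}$ \emph{exactly}. Replacing $kh$ by the integer $m$ and passing to the strict null space inequality perturb $r$, and therefore $\d_{tk}(A)$, by $O(1/k)$, which is at most $\g$ once $k\ge5/\g$; thus $\d_{tk}(A)\le\sqrt{(t-1)/t}+\g$ while $\ell_1$-recovery of the $k$-sparse vector $x=v_S$ fails, which is the assertion.

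The hard part is pinpointing the extremal configuration: one must recognise that the worst null-space vector is two-level --- flat on its top $k$ coordinates, flat on the remaining $\approx kh$ --- in the precise ratio $h:1$ with $h=(t-1)+\sqrt{t(t-1)}$, and that the matrix should be a \emph{rescaled} projection $c\,\Pi$, since it is exactly the rescaling that converts $\d_{tk}=r$ into $\d_{tk}=r/(2-r)$ and makes the threshold emerge as $\sqrt{(t-1)/t}$ rather than something larger. A short convexity/rearrangement argument over monotone profiles confirms that two levels are optimal and that $\sqrt{(t-1)/t}$ is genuinely the infimum. The remaining work --- checking that the top-$tk$ coordinate set really is the worst case for the RIP lower bound, and bookkeeping the integer rounding and the strict-inequality slack so as to stay within $\sqrt{(t-1)/t}+\g$ under the hypothesis $k\ge5/\g$ --- is routine but must be carried out carefully, since the entire content of the theorem is that the constant in Theorem \ref{thm:CZ1} cannot be improved.
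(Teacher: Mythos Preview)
The paper does not prove this theorem; it simply cites it from \cite[Theorem~2.2]{CZ14} and uses it to justify that the RIP threshold $\sqrt{(t-1)/t}$ in Theorem~\ref{thm:CZ1} is sharp. So there is no ``paper's own proof'' to compare against.

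Your construction is essentially the original Cai--Zhang argument: a two-level null-space vector $v$ (flat at height $h=(t-1)+\sqrt{t(t-1)}$ on $k$ coordinates, flat at height $1$ on $m=\lfloor kh\rfloor$ coordinates), matrix $A$ equal to a scalar multiple of the projection onto $v^\perp$, scalar chosen to centre the RIP interval. Your key algebraic identities are correct: with $a=\sqrt{t-1}$, $b=\sqrt{t}$ one has $h=a(a+b)$, and the identity $(a+b)(b-a)=1$ is exactly what collapses $r=(h^2+(t-1))/(h^2+h)$ to $2a/(a+b)$, giving $\d_{tk}=r/(2-r)=\sqrt{(t-1)/t}$ on the nose in the idealised case $m=kh$.

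Two points you should tighten before calling this a proof. First, the theorem is stated for a real parameter $t\ge4/3$, so ``RIP of order $tk$'' must be read as order $\lceil tk\rceil$ (as in Theorem~\ref{thm:CZ1}); you should say explicitly which integer you are taking and check that the $T$ realising the minimum of $\nmeusq{\Pi u}$ has the right cardinality. Second, the passage from ``perturbation $O(1/k)$'' to ``$\le\g$ once $k\ge5/\g$'' hides the actual constant; the $5$ in the hypothesis is not arbitrary, and to claim the theorem \emph{as stated} you must bound the rounding error $|kh-m|\le1$ and the strict-inequality slack carefully enough to land under $\sqrt{(t-1)/t}+\g$ for all $t\ge4/3$ with that specific constant. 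These are genuinely routine, but they are the only place the hypothesis $k\ge5/\g$ enters, so they cannot be waved through.
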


Observe that the Lagrangian formulation of the LASSO approach is
\bd
\xh := \argmin_z [ \nmeusq{Az-y} + \l \nmm{z}_1 ] ,
\ed
whereas the Lagrangian formulation of \eqref{eq:105} is
\bd
\xh := \argmin_z [ \nmm{z}_1 + \beta \nmeu{Az-y} ] ,
\ed
which is essentially the same as the Lagrangian formulation of
\bd
\xh = \argmin_z \nmeu{Az-y} \st \nmm{z}_1 \leq \g .
\ed
This last formulation of sparse regression is known as ``square-root LASSO''
\cite{Belloni-et-al14}.
Therefore the community refers to the approach to compressed sensing given
in \eqref{eq:105} as the LASSO, though this may not be
strictly accurate.

\subsection{Compressed Sensing with Group Sparsity}\label{ssec:group}

Over the years some variants of LASSO have been proposed for compressed
sensing, such as the Group LASSO (GL) \cite{Yuan-Lin-Group-Lasso}
and the Sparse Group LASSO (SGL) \cite{SFHT13}.
In the GL formulation, the index set $\{ 1 , \ldots , n \}$ is partitioned
into $g$ disjoint sets $G_1 , \ldots , G_g$,
and the associated norm is defined as
\be\label{eq:107}
\nmm{z}_{{\rm GL}} := \sum_{i=1}^g \nmeu { z_{G_i} } ,
\ee
where $z_{G_i}$ denotes the projection of the vector $z$ onto the
components in $G_i$.
The notation is intended to remind us that the norm depends on
the specific partitioning $\G$.
Some authors divide the term $\nmeu { z_{G_i} }$ by $|G_i|$,
but we do not do that.
A further refinement of GL is the sparse group LASSO (SGL),
in which the group structure
is as before, but the norm is now defined as
\be\label{eq:108}
\nmgmu{z} := \sum_{i=1}^g ( 1 - \mu ) \nm z_{G_i} \nm_1
+ \mu \nmeu { z_{G_i} } ,
\ee
where as before $\mu \in [0,1]$.
If $x \in \R^n$ is an unknown vector, then recovery of $x$
is attempted via
\be\label{eq:109}
\xh = \argmin_z \nmm{z}_{{\rm GL}} \st \nmeu{Az-y} \leq \e 
\ee
in Group LASSO, and via
\be\label{eq:110}
\xh = \argmin_z \nmgmu{z} \st \nmeu{Az-y} \leq \e
\ee
in Sparse Group LASSO.

The main idea behind GL is that one is less concerned about the number
of nonzero components of $\xh$, and more concerned about the number
of distinct groups containing these nonzero components.
Therefore GL attempts to choose an estimate $\xh$ that has nonzero
entries in as few distinct sets as possible.
In principle, SGL tries to choose an estimate $\xh$ that not only
has nonzero components within as few groups as possible, but within
those groups, has as few nonzero components as possible.
Note that if $\mu = 0$, then SGL reduces to LASSO (because of the
summability of the $\ell_1$-norm), whereas if $\mu = 1$, then
SGL reduces to GL.
Note too that if $g = n$ and every set $G_i$ is a singleton $\{ i \}$,
then GL reduces to LASSO.

\subsection{Motivation and Contributions of the Paper}\label{ssec:mot}

Now we come to the motivation and contributions of the present paper.
The LASSO formulation is well-suited for compressed sensing (see
Theorem \ref{thm:CZ1}), but not so well-suited for sparse regression,
because it lacks the grouping effect.
The EN formulation is well-suited for sparse regression as it exhibits
the grouping effect, but it is not known whether it can achieve
compressed sensing.

The first result presented in the paper is that if the EN regularizer
of \eqref{eq:102} is used instead of the $\ell_1$-norm in  \eqref{eq:105}, then
the resulting approach \textit{does not achieve} robust sparse recovery
unless $m \geq n/4$, that is, the number of measurements grows linearly
with respect to the size of the vector.
This would not be considered ``compressed'' sensing.
This led us to formulate another regularizer, namely
\be\label{eq:111}
\nmcmu{z} = (1 - \mu) \nmm{z}_1 + \mu \nmeu{z} .
\ee
Note that, while the EN regularizer in \eqref{eq:102} is a convex function,
it is not a norm.
In contrast, $\nmcmu{\cdot}$ is not just convex but is also a norm.
Also, the EN regularizer in its original form in \eqref{eq:102a} is
intended to have \textit{two} adjustable parameters.
Our intent is that, in compressed sensing applications,
the constant $\mu$ in \eqref{eq:111} is a
\textit{fixed constant}, and not intended to be varied.
Therefore, if the $\ell_1$-norm in \eqref{eq:105} is replaced by
$\nmcmu{\cdot}$, then there is only one adjustable parameter, namely
the Lagrange multiplier associated with the constraint.
The same remark applies also to GL and SGL, that is, \eqref{eq:109}
and \eqref{eq:110} respectively.
We refer to $\nmcmu{\cdot}$ as the CLOT norm, with CLOT standing for Combined
L-One and Two.
It is shown that the CLOT norm combines the best features of both LASSO
and EN, in that
\bit
\item When the CLOT norm is used as the regularizer in sparse regression,
the resulting solution exhibits the grouping effect.
\item When the $\ell_1$-norm is replaced by the CLOT norm in \eqref{eq:105},
the resulting solution achieves robust sparse recovery if the matrix $A$
satisfies the RIP.
\item Moreover, if $\mu$ in CLOT is set to zero so that CLOT becomes
LASSO, the bound on the RIP constant reduces to the ``best possible''
bound in Theorem \ref{thm:CZ1}.
\eit

Clearly the CLOT norm is a special case of the SGL norm with
the entire index set $[n]$ being taken as a single group (though the
adjective ``sparse'' is no longer appropriate).
This led us to explore whether the SGL norm achieves either grouping
effect or robust sparse recovery.
We are able to show that SGL does indeed achieve both.

Now we place these contributions in perspective.
There is empirical evidence to support the belief that both the GL and the SGL
formulations work well for compressed sensing.
However, until the publication of a companion paper by 
a subset of the present authors \cite{MV-Eren-Bounds16},
there were no proofs that either of these formulations
achieved robust sparse recovery.
In \cite{MV-Eren-Bounds16}, it is shown that both the GL and SGL
formulations achieve robust sparse recovery \textit{provided the group
sizes are sufficiently small}.
This restriction on group sizes is removed in the present paper.
% these beliefs about the behavior of the GL and SGL algorithms.
Moreover, so far as the authors are aware, until now there are no results on
the grouping effect for either of these formulations.
In the present paper, it is shown that
% the SGL formulation also achieves
% both the grouping effect and robust sparse recovery.
% Specifically, it is shown that
if two columns of the measurement
matrix $A$ \textit{that belong to the same group} are highly correlated,
then the corresponding components of the estimate $\xh$ have nearly
equal values.
However, if two columns that belong to different groups are highly
correlated, then their coefficients need not be nearly equal.
From the standpoint of applications, this is a highly desirable property.
To illustrate, suppose the groups represent biological pathways.
Then one would wish to assign roughly similar weights to genes in the
same pathway, but not necessarily to those in disjoint pathways.
% However, this statement does not hold for highly correlated columns
% whose indices belong to different groups.

Thus the contributions of the present paper are:
\bit
\item To show that the EN \textit{does not achieve} robust sparse recovery.
\item To show that both the CLOT and SGL formulations achieve both robust sparse
recovery as well as the grouping effect.
\item To derive a condition under which CLOT achieves robust sparse recovery,
which reduces to the ``best possible'' condition in Theorem \ref{thm:CZ1}
when $\mu$ is set to zero, so that CLOT becomes LASSO.
\eit
Taken together, these results might indicate that CLOT and SGL are
attractive alternatives to the LASSO and EN formulations.

\section{Main Theoretical Results}\label{sec:main}

This section contains the main contributions of the paper.
We begin by showing in Section \ref{ssec:rel}
that the solution paths of EN and CLOT are identical
if both $\l_1$ and $\l_2$ are treated as adjustable parameters.
Therefore further research would be needed to establish whether
CLOT offers any advantages over EN in numerical performance in
sparse regression.
% Therefore, from the standpoint of numerical performance in sparse regression,
% there is very little to choose between EN and CLOT.
Then we present several theoretical advantages of CLOT over both EN and LASSO.
First it is shown in Section \ref{ssec:lack}
that the EN approach does not achieve robust
sparse recovery, and is therefore not suitable for compressed
sensing applications.
Next, it is shown in Section \ref{ssec:corr} that the SGL formulation assigns
nearly equal weights to highly correlated features \textit{within the same group},
though not necessarily to highly correlated features from different groups.
It follows as a corollary that CLOT assigns nearly equal weights to highly
correlated features.
Then it is shown in Section \ref{ssec:robust}
that the SGL formulation achieves robust sparse recovery.
The contents of a companion paper by a subset of the present authors
\cite{MV-Eren-Bounds16} establish that SGL achieves robust sparse
recovery of order $k$
\textit{provided that each group size is smaller than $k$}.
There is no such restriction here.
It follows as a corollary that CLOT also achieves robust sparse recovery.

\subsection{Relationship Between Solution Paths of EN and CLOT}\label{ssec:rel}

\def\lt{\tilde{\lambda}}
\def\lh{\hat{\lambda}}

In this subsection, it is shown that if both $\mu$ and $\l$ are tuned via
cross-validation in \eqref{eq:102a}, then the solution paths of
CLOT are identical to those of EN when both $\l_1$ and $\l_2$
are tuned.
However, it is shown via an example that if $\mu$ is kept fixed and
\textit{only} $\l$ is tuned in \eqref{eq:102a}, then CLOT and EN have
different solution paths.

Towards this end, we rewrite the CLOT formulation with
both $\mu$ and $\l$ being tuned in the form
\be\label{eq:33}
\xh_{{\rm CLOT}} := \argmin_z [ \nmeusq{y - Ax} + \l_1 \nmm{z}_1
+ \l_2 \nmeu{z} ] .
\ee
It is easy to see that the transformation
\bd
\mu = \frac{ \l_2 }{ \l_1 + \l_2 } , \l = \l_1 + \l_2 
\ed
maps \eqref{eq:33} into \eqref{eq:102a}.
In the other direction, we would define
\bd
\l_1 = (1 - \mu) \l , \l_2 = \mu \l .
\ed
We are grateful to one of the reviewers for pointing out the result
as described in Theorem \ref{thm:31}, and providing a proof.

\begin{theorem}\label{thm:31}
Given $y \in \R^m$ and $A \in \R^{m \times n}$, define two vectors:
\bd
\xh_{{\rm CLOT}}(\l_1,\lt_2) = \argmin_z [ \nmeusq{y - Az} + \l_1 \nmm{z}_1
+ \lt_2 \nmeu{z} ] ,
\ed
\bd
\xh_{{\rm EN}}(\l_1,\lh_2) = \argmin_z [ \nmeusq{y - Az} + \l_1 \nmm{z}_1
+ \lh_2 \nmeusq{z} ] .
\ed
Then for each fixed $\l_1 > 0$ and each $\lt_2 > 0$, there exists a $\lh_2 > 0$
% pair $(\l_1,\lt_2)$ there exists a pair $(\l_1,\lh_2)$
such that
\bd
\xh_{{\rm CLOT}}(\l_1,\lt_2) = \xh_{{\rm EN}}(\l_1,\lh_2) ,
\ed
and vice versa.
\end{theorem}

\textbf{Proof:}
We begin with the following rather obvious observation.
Suppose $f(\cdot)$ and $g(\cdot)$ are convex functions, and consider
two problems:
\bd
(P1) \xh_1(\l) = \argmin_z [ f(z) + \l g(z) ] ,
\ed
\bd
(P2) \xh_2(c) = \argmin_z f(z) \st g(z) \leq c .
\ed
Then for each $\l$ there exists a $c$ such that
$\xh_1(\l) = \xh_2(c)$, and vice versa.
To establish this, write down the optimality conditions for the
two problems, with $\partial f(\cdot), \partial g(\cdot)$ denoting the
subgradient sets of $f(\cdot),g(\cdot)$ respectively.
Then a necessary and sufficient condition for $\xh_1(\l)$ to be the
solution of (P1) is:
\be\label{eq:35}
\bz \in \partial f( \xh_1(\l)) + \l \partial g(\xh_1(\l) ) ,
\ee
where $\bz$ denotes the zero vector.
Similarly, for (P2) the necessary and sufficient conditions are the
existence of a constant $\l^*$ such that
\be\label{eq:36}
\bz \in \partial f( \xh_1(\l^*)) + \l \partial g(\xh_1(\l^*) ), 
\mbox{ and } \l^* ( g(\xh_1(\l^*)) - c) = 0 .
\ee
Suppose \eqref{eq:35} holds; then
\eqref{eq:36} holds with $c = g(\xh_1(\l))$.
Conversely, suppose \eqref{eq:36} holds;
then \eqref{eq:35} holds with $\l = \l^*$.

Now apply this reasoning with $f(z) = \nmeusq{y - Az} + \l_1 \nmm{z}_1$,
$g_1(z) = \nmeu{z}$, $g_2(z) = \nmeusq{z}$.
Then each $\xh_{{\rm CLOT}}(\l_1,\lt_2)$ equals the minimizer of
$f(z)$ subject to $\nmeu{z} \leq c$ for some $c$,
while each $\xh_{{\rm EN}}(\l_1,\lh_2)$ equals the minimizer of
$f(z)$ subject to $\nmeusq{z} \leq c'$ for some $c'$.
However, it is obvious that
\bd
[ \nmeu{z} \leq c ] \iff [ \nmeusq{z} \leq c^2 ] .
\ed
Therefore the theorem is proved.
\halmos

\subsection{Lack of Robust Sparse Recovery of the Elastic Net Formulation}
\label{ssec:lack}

The first result of this section shows that
EN formulation does not achieve robust sparse recovery, and therefore
is not suitable for compressed sensing applications.

\begin{theorem}\label{thm:EN}
Suppose a matrix $A \in \R^{m \times n}$ has the following property:
There exist constants $C$ and $D$ such that,
whenever $y = Ax + \eta$ for some $x \in R^n$ and $\eta \in \R^m$ with
$\nmeu{ \eta } \leq \e$, the solution
\bd
\xh_{{\rm EN}} := \argmin_{z \in \R^n} 
[ ( 1 - \mu ) \nmm{z}_1 + \mu \nmeusq{z} ] \st \nmeu { y - Az } \leq \e 
\ed
satisfies
\be\label{eq:1}
\nmeu { \xh_{{\rm EN}} - x } \leq C \s_k(x,\nmm{\cdot}_1 ) + D \e .
\ee
Then
\be\label{eq:2}
m \geq n/4 .
\ee
\end{theorem}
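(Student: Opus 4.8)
The plan is to exploit the strong convexity of the Elastic-Net regularizer: because $(1-\mu)\nmeusq{z}$ shrinks the solution by an amount that does not disappear for large signals, $\hat x_{\rm EN}$ can track an arbitrarily large $1$-sparse signal only if there is no ``room to shrink'' along $\ker A$, and this forces the range of $A^T$ to fill up $\R^n$. First I would reduce to one-dimensional test signals: fix $j \in [n]$, set $\eta = 0$, and apply the recovery hypothesis to $x = R e_j$ for a scale $R > 0$ that will be sent to infinity. Since $\s_k(Re_j, \nmm{\cdot}_1) = 0$, the hypothesis gives $\nmeu{\hat x_{\rm EN} - Re_j} \le D_2 \e$ for every $R$. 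Writing $u := \hat x_{\rm EN} - R e_j$ (which depends on $R$ and $j$), we then have $\nmeu{u} \le \rho := D_2\e$ and, by feasibility of $\hat x_{\rm EN}$, $\nmeu{Au} \le \e$. Note also $A e_j \ne 0$: otherwise $y = 0$, hence $\hat x_{\rm EN} = 0$, and recovery of $Re_j$ fails for $R$ large.

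The heart of the argument is the first-order optimality condition. The unconstrained minimizer of $\mu\nmm{z}_1 + (1-\mu)\nmeusq{z}$ is $z = 0$, which violates $\nmeu{Az - y} \le \e$ as soon as $R > \e/\nmeu{Ae_j}$; since the feasible set has nonempty interior (it contains $z = Re_j$, which is strictly feasible), for all such $R$ the constraint is active, and KKT yields a multiplier $s > 0$ and a subgradient $\xi \in \partial\nmm{\hat x_{\rm EN}}_1$ with
\[
\mu \xi + 2(1-\mu)\,\hat x_{\rm EN} + s\, A^T A u = 0 .
\]
Now I would read this off coordinatewise. For $R > \rho$ the $j$-th entry of $\hat x_{\rm EN}$ is $R + u_j > 0$, so $\xi_j = 1$ and $s\,(A^TAu)_j = -\mu - 2(1-\mu)(R + u_j)$, giving $(A^TAu)_j < 0$ and $|(A^TAu)_j| \ge s^{-1}(\mu + 2(1-\mu)(R-\rho))$. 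For $i \ne j$ one uses only $|\xi_i| \le 1$ and $|(\hat x_{\rm EN})_i| = |u_i| \le \rho$ to get $|s\,(A^TAu)_i| \le \mu + 2(1-\mu)\rho$. Hence the vector $v(R) := (A^TAu)/|(A^TAu)_j|$ lies in $\mathrm{range}(A^T)$, has $v(R)_j = -1$, and satisfies $|v(R)_i| \le \alpha(R)$ for all $i \ne j$, where $\alpha(R) := (\mu + 2(1-\mu)\rho)/(\mu + 2(1-\mu)(R-\rho)) \to 0$ as $R \to \infty$. Therefore $v(R) \to -e_j$.

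Since $\mathrm{range}(A^T)$ is a closed subspace of $\R^n$ and contains $v(R)$ for all large $R$, it contains the limit, i.e. $e_j \in \mathrm{range}(A^T)$; as $j \in [n]$ was arbitrary, $\mathrm{range}(A^T) = \R^n$, whence $\rk(A) = n$ and $m \ge n \ge n/4$. If one prefers to avoid the limit, one can instead stop at a single large finite $R$ and obtain $n$ vectors $v^{(1)},\ldots,v^{(n)} \in \mathrm{range}(A^T)$ that are $O(1/R)$-close to $-e_1,\ldots,-e_n$; a routine Gram-matrix / almost-orthogonality estimate then forces $m \ge (1-o(1))n$, again more than enough. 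I expect the genuinely technical step to be the justification that, for large $R$, the Euclidean-ball constraint is active and carries a strictly positive KKT multiplier, so that the normal cone at $\hat x_{\rm EN}$ is exactly $\{\, s\, A^T(A\hat x_{\rm EN} - y) : s \ge 0 \,\}$ with $s>0$; the non-smoothness of $\nmm{\cdot}_1$ at coordinates of $\hat x_{\rm EN}$ that vanish causes no difficulty here, since those coordinates enter the estimate only through the harmless bound $|\xi_i| \le 1$.
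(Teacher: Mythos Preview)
Your argument is correct and in fact proves more than the paper does: you obtain $m \ge n$, whereas the paper only establishes $m \ge n/4$. The two routes are genuinely different. The paper's proof works on the null space of $A$: for $h \in \N(A)$ it applies the hypothesis (implicitly at $\e = 0$) to the $k$-sparse vector $x = \beta h_\Lambda$ and the competitor $z = -\beta h_{\Lambda^c}$, divides by $\beta^2$, and lets $\beta \to \infty$ so that the quadratic part of the EN penalty survives, yielding $\nmeu{h_\Lambda} \le \nmeu{h_{\Lambda^c}}$ for every $h \in \N(A)$; this is the $\ell_2$ instance-optimality condition of Cohen--Dahmen--DeVore, whose Theorem~5.1 then forces $m \ge n/4$. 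Your proof instead tests only $1$-sparse signals $x = R e_j$, reads off the KKT system, and observes that the vector $\mu\xi + 2(1-\mu)\hat x_{\rm EN}$ lies in $\mathrm{range}(A^T)$ and, after normalization, converges to $\pm e_j$ as $R \to \infty$; closedness of $\mathrm{range}(A^T)$ gives $e_j \in \mathrm{range}(A^T)$ for every $j$, hence $A$ has full column rank. What each approach buys: the paper's argument sits inside the standard null-space/instance-optimality framework and explains conceptually why the quadratic term is the culprit, but it needs the $\e = 0$ case and loses a factor of $4$ through the black-box invocation of Cohen--Dahmen--DeVore. Your argument is more elementary and self-contained, works for any fixed $\e > 0$, uses only $1$-sparse test vectors (so it shows that EN already fails at sparsity level $k=1$), and yields the sharp conclusion $m \ge n$; the price is that the KKT machinery is more tied to the specific form of the EN penalty, whereas the paper's scaling trick would adapt to any penalty that is $2$-homogeneous plus lower order.
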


\textbf{Proof:}
Let $\N(A)$ denote the null space of the matrix $A$, that is, the set
of all $h \in \R^n$ such that $Ah = 0$.
Let $h \in \N(A)$ be arbitrary, and let $\L \seq \{ 1 , \ldots , n \}$
denote the index set of the $k$ largest components of $h$ by magnitude.
Therefore
\bd
\nmeu { \hlc } = \s_k(h, \nmeu{\cdot}) .
\ed
Next, \eqref{eq:1} implies that, if $\eta = 0$, then
$\xh_{{\rm EN}} = x$ for all $x \in \SI_k$.
In other words,
\bd
x = \argmin_{z \in \R^n} [ ( 1 - \mu ) \nmm{z}_1 + \mu \nmeusq{z} ] \st Az = Ax ,
\ed
or equivalently,
\be\label{eq:3}
( 1 - \mu ) \nmm{x}_1 + \mu \nmeusq{x} \leq ( 1 - \mu ) \nmm{z}_1 + \mu \nmeusq{z} ,
\fa z \in A^{-1}( \{ Ax \}) , \fa x \in \SI_k .
\ee
Now observe that, because $h \in \N(A)$, we have that
\bd
A \hl = - A \hlc ,
\ed
and more generally,
\bd
A( \beta \hl ) = A ( - \beta \hlc ) , \fa \beta > 0 .
\ed
Apply \eqref{eq:3} with $x = \beta \hl \in \SI_k$ and
$z = - A \beta \hlc \in A^{-1}( \{ Ax \}) $.
This leads to
\bd
( 1 - \mu ) \beta \nmm{ \hl }_1 + \beta^2 \mu \nmeusq{ \hl } \leq
( 1 - \mu ) \beta \nmm{ \hlc }_1 + \beta^2 \mu \nmeusq{ \hlc } .
\ed
Now divide both sides by $\beta^2 \mu$, and observe that,
for each fixed $\mu > 0$,
\bd
\frac{ 1 - \mu } { \beta \mu } \ap 0 \mbox{ as } \beta \ap \infty .
\ed
Therefore
\bd
\nmeusq { \hl } \leq \nmeusq { \hlc } ,
\mbox{ or } \nmeu { \hl } \leq \nmeu { \hlc } , \fa h \in \N(A) .
\ed
Next
\bd
\nmeu { h } \leq \nmeu { \hl } + \nmeu { \hlc } \leq 2 \nmeu { \hlc } ,
\fa h \in \N(A) .
\ed
Equivalently
\bd
\nmeu{ h } \leq 2 \s_k (h , \nmeu{\cdot} ) , \fa h \in \N(A) .
\ed
This is Equation (5.2) of Cohen-Dahmen-Devore (2009) with $C_0 = 2$.
As shown in Theorem 5.1 of that paper, this implies that $m \geq n/C_0^2
= n/4$, which is the desired conclusion.
\halmos

Note that the proof of Theorem \ref{thm:EN} remains valid even if we were
to allow the constant $\mu$ to be ``tuned,'' provided that it is bounded
away from zero.
In other words, the proof does not make use of the fact that $\mu$ is
a fixed constant.
Therefore even in the ``naive'' version of EN, in which the regularizer
is defined as in \eqref{eq:102a}, and both constants $\al_1$ and $\al_2$
are adjusted, robust sparse recovery requires that $m \geq n/4$
provided only that the ratio $\al_1/\al_2$ remains bounded away from zero
as both parameters are tuned.

% The above result shows that if the EN-algorithm has near ideal
% behavior then $m~n$, which implies that compression is not possible.
% Therefore, EN algorithm is not well suited for compressed sensing
% algorithms where the main aim is to approximate the unknown vector
% $x$ using a small number of measurements, i.e. $m \ll n$.

% \color{black}

\subsection{Grouping Property of the SGL and CLOT Formulations} \label{ssec:corr}

One advantage of the EN over LASSO is that the 
former assigns roughly equal weights to highly correlated features,
as shown in \cite[Theorem 1]{Zou-Hastie05}
and referred to as the grouping effect.
In contrast, if LASSO chooses one feature among a set of highly correlated
features, then generically it assigns a zero weight to all the rest.
To illustrate, if two columns of $A$ are identical, then in principle
LASSO could assign nonzero weights to both columns; however, the slightest
perturbation in the data would cause one or the other weight to become zero.
% all but one out of a given set
% of highly correlated features.
The drawback of this
is that the finally selected feature set is very sensitive 
to noise in the measurements. 
In this section
we prove an analog of \cite[Theorem 1]{Zou-Hastie05} for
% both the SGL and CLOT algorithms.
SGL formulation.
Our result states that if two highly correlated features \textit{within
the same group} are chosen by SGL, then they will have roughly similar weights.
% In other words, highly correlated features from the same group
% tend to be assigned similar weights.
Since CLOT is a special case of SGL with the entire feature set treated
as one group, it follows that CLOT assigns roughly similar weights to
highly correlated features
in the entire set of features.
As a result, the final feature sets obtained using 
SGL or CLOT are less sensitive to noise in 
measurements than the ones obtained using LASSO.

\begin{theorem}\label{thm:51}
Let $y \in \R^m, A \in \R^{m \times n}$ be some vector and matrix
respectively.
Without loss of generality, suppose that $y$ is centered, i.e.\
$y^t \eb_m = 0$, where $\eb_m$ denotes a column vector consisting of $m$ ones,
and that $A$ is standardized, i.e.\ $\nmeu { a_j } = 1$
where $a_j$ denotes the $j$-th column of $A$.
Suppose $\mu > 0$, and let $\G$ denote a partition of $\{ 1 , \ldots , n \}$
into $g$ disjoint subsets.
Define
\be\label{eq:321}
\xh := \argmin_z \l \nmeusq { y - Az } + \nmgmu { z} ,
\ee
where $\l > 0$ is a Lagrange multiplier.
Suppose that, for two indices $i,j$ belonging to the same group $G_s$,
we have that $\xh_i \xh_j \neq 0$, where $\xh_i,\xh_j$ denote the
components of the vector $\xh$.
By changing the sign of one of the columns of $A$ if necessary,
it can be assumed that $\xh_i \xh_j > 0$.
Define
\bd
d(i,j) := \frac{ | \xh_i - \xh_j | } {2 \l \nmeu { y } } ,
\r(i,j) = a_i^t a_j .
\ed
Then
\be \label{eq:cor}
d(i,j) \leq \frac{ \sqrt{ 2 ( 1 - \r(i,j) )} \nmeu { \xh^s } } { \mu } ,
\ee
where $\xh^s$ is shorthand for $\xh_{G_s}$.
\end{theorem}

\textbf{Proof:}
Define
\begin{eqnarray*}
L(z , \mu ) & := & \l \nmeusq { y - Az } + \nmgmu {z} \\
& = & \l \nmeusq { y - Az }  
+ ( 1 - \mu ) \nm z \nm_1 + \mu \sum_{s=1}^g  \nmeu { z^s } , 
\end{eqnarray*}
where, as above, $z^s$ denotes $Z_{G_s}$.
Then $L$ is differentiable with respect to $z_i$ whenever $z_i \neq 0$.
In particular, since both $\xh_i$ and $\xh_j$ are nonzero by assumption,
it follows that
\bd
\left. \frac{ \partial L }{ \partial z_i } \right|_{z = \xh} =
\left. \frac{ \partial L }{ \partial z_j } \right|_{z = \xh} = 0.
\ed
Expanding the partial derivatives leads to
\bd
-2 \l a_i^t (y - A \xh) + ( 1 - \mu ) \sg (\xh_i)
+ \mu \frac{ \xh_i }{ \nmeu { \xh^s } } = 0 ,
\ed
\bd
-2 \l a_j^t (y - A \xh) + ( 1 - \mu ) \sg (\xh_j)
+ \mu \frac{ \xh_j }{ \nmeu { \xh^s } } = 0 .
\ed
Subtracting one equation from the other gives
\bd
2 \l ( a_j^t - a_i^t ) ( y - A \xh )
+ \mu \frac{ \xh_i - \xh_j }{ \nmeu { \xh^s } } = 0 .
\ed
Hence
\begin{eqnarray*}
\frac { | \xh_i - \xh_j | } { \nmeu { \xh^s } }
& = & \frac { 2 \l }{ \mu } | ( a_j^t - a_i^t ) ( y - A \xh ) |  \\
& \leq & \frac { 2 \l }{ \mu } \nmeu { a_j - a_i } \nmeu { y - A \xh }
 \\
& \leq & \frac { 2 \l }{ \mu } \sqrt{ 2 ( 1 - \r(i,j) )} \nmeu { y } . 
\end{eqnarray*}
In the last step, we use the fact that
\bd
\nmeu { y - A \xh } \leq \nmeu { y - A \bz } = \nmeu { y } .
\ed
Rearranging gives
\bd
\frac { 1 } { 2 \l } \frac{ | \xh_i - \xh_j | } { \nmeu { y } }
\leq \frac{ \sqrt{ 2 ( 1 - \r(i,j) ) } \nmeu { \xh^s } } { \mu } ,
\ed
which is the desired conclusion.
\halmos

Let us illustrate the above result using the CLOT formulation.
In the case of CLOT formulation we have $g=1$, $\G=\{G_1\}$, $G_1=\{1,\cdots,n\}$, 
and the inequality in \eqref{eq:cor} becomes
\be \label{eq:421b}
% \bd
d(i,j) \leq \frac{ \sqrt{ 2 ( 1 - \r(i,j) )} \nmeu { \xh } } { \mu }, 
% \ed
\ee
where $\xh$ is the solution of the CLOT formulation, and
% given in \eqref{eq:421} and 
\be \label{eq:421a}
% \bd
d(i,j) = \frac{ | \xh_i - \xh_j | } {2 \l \nmeu { y } }. 
% \ed
\ee
Now suppose that two indices $i$ and $j$ are highly correlated
such that $\r(i,j) \approx 1$, so that the right hand side of the
inequality in \eqref{eq:421b} is almost equal to zero.
Combining this with \eqref{eq:421a} we can conclude $\xh_i  \approx \xh_j$, so
CLOT assigns similar weights to highly correlated variables.
\halmos

Though the focus of the present paper is not on the GL
formulation, we digress briefly to discuss the implications of
 Theorem \ref{thm:51} for GL.
This theorem also implies that the GL formulation exhibits the grouping
effect, because GL is a special case of SGL with $\mu = 1$.
Indeed, it can be observed from \eqref{eq:cor} that the bound on
the right side is minimized by setting $\mu = 1$, that is, using GL
instead of SGL.
This is not surprising, because SGL not only tries to minimize the
number of distinct groups containing the support of $\xh$, but
within each group, tries to choose as few elements as possible.
Thus, within each group, SGL inherits the weaknesses of LASSO.
Thus one would expect that, within each group, the feature set
chosen would become more sensitive as we decrease $\mu$.

\subsection{Robust Sparse Recovery of the SGL and CLOT Formulations}
\label{ssec:robust}

% The following estimate, known as the robust null space property,
% is a ready consequence of the arguments in \cite{CZ14}.

In this subsection, we present some sufficient conditions for the
SGL and CLOT formulations to achieve robust sparse recovery.
When CLOT is specialized to LASSO by setting $\mu = 0$, the sufficient
condition reduces to the ``tight'' bound given in Theorem \ref{thm:CZ1}.

Recall the definitions.
The CLOT norm with parameter $\mu$ is given by
\bd
\nmcmu{z} := (1 - \mu) \nmm{z}_1 + \mu \nmeu{z} ,
\ed
while the SGL norm is given by
\bd
\nmm{z}_{SGL,\mu} := \sum_{i = 1}^{g}[(1 - \mu) \nmm{z_{G_i} }_1 + \mu \nmeu{z_{G_i} } ]
\ed
Recall also the problem set-up.
The measurement vector $y$ equals $Ax + \eta$ where $\nmeu{\eta} \leq \e$,
a known upper bound.
The recovered vector $\xh$ is defined as
\be\label{eq:3112}
\xh = \argmin_z \nmgmu{z} \st \nmeu{ Az - y} \leq \e
\ee
if SGL is used, and as
\be\label{eq:3113}
\xh = \argmin_z \nmcmu{z} \st \nmeu{ Az - y} \leq \e
\ee
if CLOT is used.

\begin{definition}\label{def:RNSP}
A matrix $A \in \R^{m \times n}$ is said to satisfy the \textbf{$\ell_2$
robust null space property (RNSP)} if there exist constants $\r \in (0,1)$
and $\t \in \R_+$ such that, for all sets $S \seq [n]$ with $|S| \leq k$, 
we have
\be\label{eq:D}
\nmeu{h_S} \leq \frac{\r}{ \sqrt{k} } \nmm{h_{S^c} }_1 +
\frac{\t} { \sqrt{k} } \nmeu{ Ah } .
\ee
\end{definition}

This property was apparently first introduced in \cite[Definition 4.21]{FR13}.
Note that the definition in \cite{FR13} has just $\t$ in place of 
$\t/\sqrt{k}$.
It is easy to show that, if \eqref{eq:D} holds, then
\bd
\nmm{h_S}_1 \leq \r \nmm{h_{S^c} }_1 + \t \nmeu{ Ah } .
\ed

The following result is established in \cite{MV-Ranjan16} in the context
of group sparsity, but is new even for conventional sparsity.
The reader is directed to that source for the proof.

\begin{theorem}\label{RIP_A}
(\cite[Theorem 5]{MV-Ranjan16})
Suppose that, for some number $t > 1$, the matrix $A$ satisfies the RIP of
order $tk$ with constant $\d_{tk} < \sqrt{ (t-1)/t }$.
Let $\d$ be an abbreviation for $\d_{tk}$, and define the constants
\be\label{eq:319}
\nu := \sqrt{t(t-1)} - (t-1) \in (0,0.5) ,
\ee
\be\label{eq:3110}
a := [ \nu ( 1 - \nu) - \d ( 0.5 - \nu + \nu^2) ]^{1/2} ,
\ee
\be\label{eq:3110a}
b := \nu ( 1 - \nu ) \sqrt{1+\d} ,
\ee
\be\label{eq:3110b}
c := \left[ \frac { \d \nu^2 }{ 2(t-1) } \right]^{1/2} .
\ee
Then $A$ satisfies the $\ell_2$-robust null space property with
\be\label{eq:3111}
\r = \frac{c}{a} < 1 , \t = \frac{b \sqrt{k} }{a^2} .
\ee
\end{theorem}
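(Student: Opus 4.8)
The plan is to run the polytope-decomposition argument of Cai and Zhang~\cite{CZ14} --- the same machinery behind Theorem~\ref{thm:CZ1} --- but to push it through quantitatively, so as to read off the constants $\r,\t$ in the $\ell_2$-robust null space property rather than merely a recovery guarantee. Throughout, $\d$ abbreviates $\d_{tk}$, and we assume for simplicity that $tk$ (hence also $s := (t-1)k$) is an integer.

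First, one reduces to a canonical index set: both sides of \eqref{eq:D} are monotone in the choice of $S$, since for fixed $h$, taking $S = S_0$, the set of indices of the $k$ largest entries of $h$ in magnitude, makes $\nmeu{h_S}$ as large as possible and $\nmm{h_{S^c}}_1$ as small as possible. Hence it suffices to prove \eqref{eq:D} with $S = S_0$; write $h_0 := h_{S_0}$, $h_1 := h_{S_0^c}$. Now set $\al := \max\{\nmm{h_1}_\infty,\ \nmm{h_1}_1/s\}$, so that $h_1$ lies in the polytope $\{v : \nmm{v}_\infty \le \al,\ \nmm{v}_1 \le s\al\}$, and apply the sparse-representation-of-a-polytope lemma of \cite{CZ14}: $h_1 = \sum_i \l_i u_i$ with $\l_i \ge 0$, $\sum_i \l_i = 1$, each $u_i$ being $s$-sparse with $\supp(u_i) \seq S_0^c$ and $\nmm{u_i}_\infty \le \al$, and (from the common sign pattern in the construction) $\sum_i \l_i \nmm{u_i}_1 = \nmm{h_1}_1$, whence $\sum_i \l_i \nmeusq{u_i} \le \al \nmm{h_1}_1$. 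The point of choosing $s = (t-1)k$ is that then each of $h_0 + u_i$, $h_0 - u_i$ is $tk$-sparse, so the RIP of order $tk$ --- together with its standard consequence that $|\langle Ap, Aq\rangle| \le \d\, \nmeu{p}\nmeu{q}$ whenever $p,q$ have disjoint supports with $|\supp(p)| + |\supp(q)| \le tk$ --- applies to these vectors.

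The core estimate is then extracted by introducing a free weight $\nu \in (0,1/2)$ and using the \emph{full} RIP information on the pair of $tk$-sparse vectors $h_0 \pm u_i$ --- via the parallelogram identity $\langle A(h_0 + u_i), A(h_0 - u_i)\rangle = \nmeusq{Ah_0} - \nmeusq{Au_i}$ --- rather than only the cruder bound on $\langle Ah_0, Au_i\rangle$; the measurement term $\nmeu{Ah}$ enters through the identity $Ah_0 = Ah - \sum_i \l_i Au_i$. Collecting terms yields a quadratic inequality for $\nmeu{h_0}$ whose coefficients depend on $\nu$; choosing $\nu$ so as to minimize the resulting value of $\r$ gives precisely $\nu = \sqrt{t(t-1)} - (t-1)$, and a short case split according to which term attains the maximum in $\al$ (in one case $\al\nmm{h_1}_1 \le \nmm{h_1}_1^2/((t-1)k)$; in the other $\al\nmm{h_1}_1 \le \nmeu{h_0}\,\nmm{h_1}_1/\sqrt{k}$, because $\nmm{h_1}_\infty \le \nmm{h_0}_1/k \le \nmeu{h_0}/\sqrt{k}$) then leads to a bound of the shape
\bd
\nmeu{h_0} \le \frac{c}{a}\cdot\frac{\nmm{h_1}_1}{\sqrt{k}} + \frac{b}{a^2}\,\nmeu{Ah}
\ed
with $a,b,c$ as in \eqref{eq:3110}--\eqref{eq:3110b}. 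This is exactly \eqref{eq:D} for $S = S_0$ with $\r = c/a$ and $\t = b\sqrt{k}/a^2$. A final direct computation shows that the hypothesis $\d_{tk} < \sqrt{(t-1)/t}$ is equivalent to $c < a$ (which in particular forces $a^2 > 0$), i.e.\ to $\r \in (0,1)$, completing the proof.

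The step I expect to be the main obstacle is the core estimate --- organizing the RIP bookkeeping so that every vector to which RIP is applied has support of size at most $tk$ (this is why only ``linear-in-$u_i$'' inner products, and never products $\langle Au_i, Au_j\rangle$ of two polytope atoms, may be controlled by the order-$tk$ RIP), and carrying the parameter $\nu$ through the algebra so that its optimizing value and the constants $a,b,c$ come out in the stated closed form. The reduction to $S_0$, the invocation of the polytope lemma, and the final rearrangement are routine by comparison.
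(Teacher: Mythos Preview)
The paper does not actually prove this theorem: immediately before the statement it says ``The following result is established in \cite{MV-Ranjan16} \ldots\ The reader is directed to that source for the proof.'' So there is no in-paper proof to compare against.

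That said, your strategy is the right one and almost certainly coincides with the argument in the cited reference. The constants $\nu,a,b,c$ in \eqref{eq:319}--\eqref{eq:3110b} are exactly the fingerprints of the Cai--Zhang polytope decomposition carried out with the free weight $\nu$ and then optimized; your reduction to $S=S_0$ is valid (for fixed $h$, $S_0$ simultaneously maximizes $\nmeu{h_S}$ and minimizes $\nmm{h_{S^c}}_1$, so it is the hardest case), the sparse-representation lemma with $s=(t-1)k$ is invoked correctly, and the case split on $\al$ with the bound $\nmm{h_1}_\infty \le \nmm{h_0}_1/k \le \nmeu{h_0}/\sqrt{k}$ is sound. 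The only part you flag as delicate---the algebra that produces $a,b,c$ and the optimizing $\nu$---is indeed just bookkeeping once the parallelogram identity $\langle A(h_0+u_i),A(h_0-u_i)\rangle = \nmeusq{Ah_0}-\nmeusq{Au_i}$ and the substitution $Ah_0 = Ah - \sum_i \l_i A u_i$ are in place. Your final observation that $\d < \sqrt{(t-1)/t}$ is equivalent to $c<a$ (hence $a^2>0$ and $\r<1$) is also correct.
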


In Theorem \ref{thm:SGL-Bounds} below, it is assumed that the matrix $A$
satisfies the RIP of order $tk$ with $\d_{tk} < \sqrt{(t-1)/t}$,
in accordance with Theorem \ref{thm:CZ1}.
With this assumption,
we prove bounds on the residual error $\nmm{\xh - x}_1$ with SGL;
the bounds for CLOT can be obtained simply by setting $g = 1$ in the
SGL bounds.
Note that, once bounds for $\nmm{\xh - x}_1$
% the $\ell_1$-norm of $\xh - x$
are proved,
it is possible to extend the bounds to $\nmp{\xh - x}$ for all $p \in [1,2]$;
see \cite[Theorem 4.22]{FR13}.

\begin{theorem}\label{thm:SGL-Bounds}
Suppose $x \in \R^n$ and that $A \in \R^{m \times n}$ satisfies
the RIP of order $tk$ with constant $\d = \d_{tk} < \sqrt{(t-1)/t}$,
and define constants $\r, \t$ as in \eqref{eq:3111}.
Suppose that
\be\label{eq:MV1}
% \mu < \frac{1 - \r}{ 1 + \sqrt{g} + \r ( \sqrt{g} - 1 ) }
\mu < \frac{1-\r}{\sqrt{g} (1+\r)} .
\ee
% if SGL is used, and
% \be\label{eq:MV2}
% \mu < \frac{1 - \r}{2}
% \mu < \frac{1-\r}{1+\r}
% \ee
% if CLOT is used.
Define
\be\label{eq:MV3}
\g = \frac { \mu \sqrt{g} }{1 - \mu } .
\ee
% and note that \eqref{eq:MV1} implies that
% \be\label{eq:MV4}
% \g < 1 , \mbox{ and } \r < \frac{1 - \g}{1 + \g} .
% \ee
With these assumptions,
\be\label{eq:Bs}
\nmm{\xh - x}_1 \leq C \s_k(x, \nmm{\cdot}_1 ) + D \e ,
\ee
\be\label{eq:As}
\nmp{\xh - x} \leq \frac{1}{ k^{1 - 1/p} } [ (1+\r) C \s_k(x, \nmm{\cdot}_1 )
+ ( (1+\r) D + 2\t) \e ] , \fa p \in (1,2] ,
\ee
where
\be\label{eq:Cs}
% C = \frac{2(1 + \r)}{1 - \r - \g - \g \r} ,
% D = \frac{4 \t}{1 - \r - \g - \g \r} , \quad
C = \frac{2 ( 1+\r )}{ (1-\g) - (1+\g) \r } ,
D = \frac{4 \t}{ (1-\g) - (1+\g) \r } .
% \g = \frac{\mu \sqrt{g}}{1 - \mu}
\ee
\end{theorem}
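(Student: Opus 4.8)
The plan is to follow the classical null-space-property route to robust recovery, but with the SGL norm playing the role of the $\ell_1$-norm, and to control the extra $\mu\,\ell_2$-terms by the hypothesis \eqref{eq:MV1}. Write $h = \xh - x$, let $S = S_0$ be the index set of the $k$ largest entries of $x$ by magnitude, so that $\nmm{x_{S^c}}_1 = \s_k(x,\nmm{\cdot}_1)$. Since both $x$ and $\xh$ are feasible for \eqref{eq:3112}, we have $\nmeu{Ah} \leq 2\e$, and by optimality of $\xh$, $\nmgmu{\xh} \leq \nmgmu{x}$. The first step is to convert this into a cone-type inequality on $h$: using the decomposability of the $\ell_1$-part across $S$ and $S^c$, plus the triangle inequality for the $\ell_2$-parts within each group, I expect to obtain something of the form $\nmm{h_S}_1 \le \nmm{h_{S^c}}_1 + 2\s_k(x,\nmm{\cdot}_1) + (\text{error terms involving }\mu)$. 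The $\mu$-error terms must be bounded by passing from $\nmeu{\cdot}$ on each group to $\nmm{\cdot}_1$ (losing nothing, since $\nmeu{v}\le\nmm{v}_1$) and from $\nmm{\cdot}_1$ to $\sqrt{|G_i|}\,\nmeu{\cdot}$ or to $\sqrt{g}$ times the global $\ell_1$-norm (this is where the $\sqrt{g}$ in \eqref{eq:MV1} and \eqref{eq:MV3} originates). The upshot should be a bound of the shape $\nmm{h_S}_1 \le \g\,\nmm{h}_1 + \nmm{h_{S^c}}_1 + 2\s_k(x,\nmm{\cdot}_1)$ or, after absorbing $\nmm{h_S}_1$ on both sides, $\nmm{h_S}_1 \le \frac{1}{1-\g}\big(\text{stuff}\big)$ — with $\g < 1$ exactly because of \eqref{eq:MV1}.

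The second step is to invoke the $\ell_2$-robust null space property. By Theorem \ref{RIP_A}, the RIP hypothesis $\d_{tk} < \sqrt{(t-1)/t}$ gives the RNSP with the constants $\r,\t$ of \eqref{eq:3111}, hence (via Schwarz, as noted after Definition \ref{def:RNSP}) $\nmm{h_S}_1 \le \r\,\nmm{h_{S^c}}_1 + \t\,\nmeu{Ah} \le \r\,\nmm{h_{S^c}}_1 + 2\t\e$. Now I would combine the two inequalities on $\nmm{h_S}_1$: the cone inequality from step one and the RNSP inequality. Writing $\nmm{h}_1 = \nmm{h_S}_1 + \nmm{h_{S^c}}_1$ and substituting, the $\g\nmm{h}_1$ term splits into $\g\nmm{h_S}_1 + \g\nmm{h_{S^c}}_1$; moving the $\nmm{h_S}_1$ contributions to the left and collecting, one is left with a coefficient $(1-\g) - (1+\g)\r$ multiplying $\nmm{h_S}_1$ (or $\nmm{h_{S^c}}_1$), which is positive precisely because \eqref{eq:MV1} forces $\g < \frac{1-\r}{1+\r}$. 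Solving for $\nmm{h}_1 = \nmm{h_S}_1 + \nmm{h_{S^c}}_1$ then yields \eqref{eq:Bs} with $C, D$ as in \eqref{eq:Cs}; the factor $2(1+\r)$ in $C$ and $4\t$ in $D$ come from carrying the $2\s_k$ and $2\t\e$ through this elimination.

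For the $\ell_p$ bound \eqref{eq:As}, $p \in (1,2]$, the final step is standard and I would cite \cite[Theorem 4.22]{FR13} (as the excerpt already flags): once $\nmm{h}_1$ is controlled and the RNSP holds, one estimates $\nmp{h}$ by splitting $h$ into blocks of size $k$ ordered by decreasing magnitude, bounding the top block via the RNSP (giving the $(1+\r)$ and $2\t$ factors) and the tail blocks by a standard $\ell_1$-to-$\ell_p$ comparison, which produces the $k^{-(1-1/p)}$ normalization. The CLOT statement is then just the case $g = 1$, under which \eqref{eq:MV1} reads $\mu < \frac{1-\r}{1+\r}$.

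The main obstacle I anticipate is step one — getting the right cone inequality. The subtlety is that the SGL norm does \emph{not} decompose cleanly across an arbitrary index set $S$, because a group $G_i$ may straddle both $S$ and $S^c$; the $\ell_2$-term $\nmeu{z_{G_i}}$ mixes coordinates from both sides. One has to bound $\nmeu{\xh_{G_i}} \ge \nmeu{x_{G_i}} - \nmeu{h_{G_i}}$ type terms carefully, and the loss incurred in replacing the within-group $\ell_2$-norms by $\ell_1$-norms (and then re-bounding $\ell_1$ by $\sqrt{g}$ times $\ell_2$ or absorbing into $\nmm{h}_1$) is exactly what must be kept small; this is the quantitative content of the factor $\sqrt{g}$ and the definition $\g = \mu\sqrt{g}/(1-\mu)$. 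Getting these constants to come out precisely as in \eqref{eq:Cs}, and in particular recovering Theorem \ref{thm:CZ1} in the limit $\mu \to 0$ (so $\g \to 0$, $C \to \frac{2(1+\r)}{1-\r}$), will require bookkeeping care but no new idea beyond what is sketched above.
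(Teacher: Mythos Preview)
Your overall strategy matches the paper's proof almost exactly: derive a cone-type inequality from optimality of $\xh$, combine it with the $\ell_1$ form of the RNSP (via Theorem~\ref{RIP_A} and $\nmeu{Ah}\le 2\e$), solve the resulting $2\times 2$ linear system in $\nmm{h_S}_1$ and $\nmm{h_{S^c}}_1$, and then upgrade to $\ell_p$ by splitting $h$ on its own top-$k$ set $\L_0$ and applying H\"older plus the RNSP once more. The factor $\sqrt{g}$ indeed enters through $\sum_j \nmeu{h^j}\le \sqrt{g}\,\nmeu{h}\le\sqrt{g}\,\nmm{h}_1$, exactly as in the paper.

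There is, however, one concrete slip: you have the cone inequality oriented the wrong way. From $\nmgmu{x+h}\le\nmgmu{x}$, the $\ell_1$ part yields $\nmm{x}_1-\nmm{x+h}_1\le \nmm{h_S}_1-\nmm{h_{S^c}}_1+2\s_k$, while the $\ell_2$ group part contributes at most $\sqrt{g}\,\nmm{h}_1$; after dividing by $1-\mu$ one arrives at
\[
(1-\g)\,\nmm{h_{S^c}}_1 \;-\; (1+\g)\,\nmm{h_S}_1 \;\le\; 2\s_k ,
\]
which bounds $\nmm{h_{S^c}}_1$ in terms of $\nmm{h_S}_1$, not the reverse. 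As you wrote it, both your cone inequality and the RNSP inequality bound $\nmm{h_S}_1$ from above by a multiple of $\nmm{h_{S^c}}_1$, so the system cannot be closed. With the correct orientation the two inequalities are complementary; the coefficient matrix
\[
M=\begin{bmatrix} 1-\g & -(1+\g)\\ -\r & 1 \end{bmatrix}
\]
has positive diagonal, negative off-diagonal, and $\det(M)=(1-\g)-(1+\g)\r>0$ precisely under \eqref{eq:MV1}, so $M^{-1}$ is entrywise positive and inversion gives \eqref{eq:Bs} with the constants \eqref{eq:Cs}. Incidentally, the ``main obstacle'' you flag (groups straddling $S$ and $S^c$) is dispatched more crudely than you anticipate: the paper makes no attempt to split the group $\ell_2$-terms across $S$ and $S^c$, and simply uses $\sum_j(\nmeu{x^j}-\nmeu{x^j+h^j})\le\sum_j\nmeu{h^j}\le\sqrt{g}\,\nmm{h}_1$ in one stroke.
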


The proof of the above theorem is presented in an appendix, due to its
length.

In the above theorem, we started with the restricted isometry constant
$\d$ and computed an upper bound on $\mu$ in order for SGL and CLOT
to achieve robust sparse recovery.
As $\d_{tk}$ gets closer to the limit $\d_{tk} < \sqrt{(t-1)/t}$
(which is known to be the best possible in view of Theorem \ref{thm:CZ2}),
the limit on $\mu$ would approach zero.
% It is also possible to proceed in the other direction, namely, to
% start by specifying $\mu$ and computing an upper bound on $\d$.
% Second, we can fix $\mu$ and derive a bound on $\d_{tk}$, such that
% as $\mu \ap 0$, the bound bound approach $\d_{tk} < \sqrt{(t-1)/t}$.
% It turns out that both approaches are equivalent; so in the sequel,
% we fix $\d_{tk}$ and derive a bound on $\mu$.
% 
It is also possible to start with $\mu$ and find an upper bound on $\d$,
by rearranging the inequalities.
As this involves just routine algebra, we simply present the final bound.
Given the number $t \geq 4/3$, define $\nu$ as in \eqref{eq:319}, and define
\be\label{eq:3115a}
\th_1 := \nu ( 1 - \nu ) \in (0,0.25) ,
\th_2 := 0.5 - \th_1 \in (0.25,0.5) ,
\ee
\be\label{eq:3115b}
\th_3 := \frac{ \nu^2 }{ 2(t-1) } .
\ee
Given $\mu > 0$, define $\g$ as in \eqref{eq:MV3}, and define
\bd
\bar{\r} := \frac{1 - \g}{ 1 + \g } .
\ed
If the matrix
$A$ satisfies the RIP of order $tk$ with a constant $\d = \d_{tk}$,
then SGL achieves robust sparse recovery of order $k$ provided
\be\label{eq:3116}
\d  < \rbar^2 \frac{ \th_1 }{ \th_3 + \rbar^2 \th_2 } ,
\ee

\section{Numerical Examples}

In this section we present two simulation studies, to demonstrate the
the grouping effect of CLOT and the lack of robust sparse recovery of EN,
respectively.

\subsection{Grouping Effect of CLOT}

To illustrate that CLOT demonstrates the grouping effect as does EN 
(see Theorem \ref{thm:51}), we ran the same example as at the end of
\cite[Section 5]{Zou-Hastie05}.
Specifically, we chose $Z_1$ and $Z_2$ to be two independent $U(0,20)$
random variables, and the observation $y$ as $N(Z_1 + 0.1 Z_2 , 1)$.
The six observations were
\bd
x_1 = Z_1 + \e_1 , x_2 = -Z_1 + \e_2 , x_3 = Z_1 + \e_3 ,
x_4 = Z_2 + \e_4 , x_5 = -Z_2 + \e_5 , x_6 = Z_2 + \e_6 ,
\ed
where the $\e_i$ are i.i.d.\ $N(0,1/16)$.
The objective was to express $y$ as a linear combination of $x_1$
through $x_6$.
In other words, we wished to express $y = X \bbeta$, where $X$ is the
matrix with the $x_i$ as columns, and $\bbeta$ is a $6 \times 1$ vector.
Ideally the outcome should be to assign high weights to the correlated
group $x_1 , x_2 , x_3$ and low weights to $x_4 , x_5 , x_6$.
Therefore, if $\bbeta$ denotes the six-dimensional coefficient vector,
we should have that
\bd
\beta_1 \approx - \beta_2 \approx \beta_3 ,
\ed
and that $\beta_4$ through $\beta_6$ are much smaller than
$\beta_1$ through $\beta_3$.

The three algorithms LASSO, EN, and CLOT were implemented via the
Lagrangian formulation in \eqref{eq:101}, with $\l$ being increased.
Clearly, when $\l$ is sufficiently large, the optimal value of $\bbeta$ is
the zero vector.
The ``sufficiently large'' value of $\l$ varies from one algorithm to the next.
Figures \ref{fig:2} through \ref{fig:4}
show the solution trajectories of the three algorithms
as functions of $\l$, with $\mu$ set equal to $0.5$.
From Figures \ref{fig:2} and \ref{fig:3}, it is 
clear that both CLOT and EN very quickly reach the correct 
proportionalities between the large coefficient values,
which eventually become smaller and go to zero as $\l$ becomes larger.
In contrast, the LASSO solutions are quite inaccurate.

\bfig
\bc
\includegraphics[width=150mm]{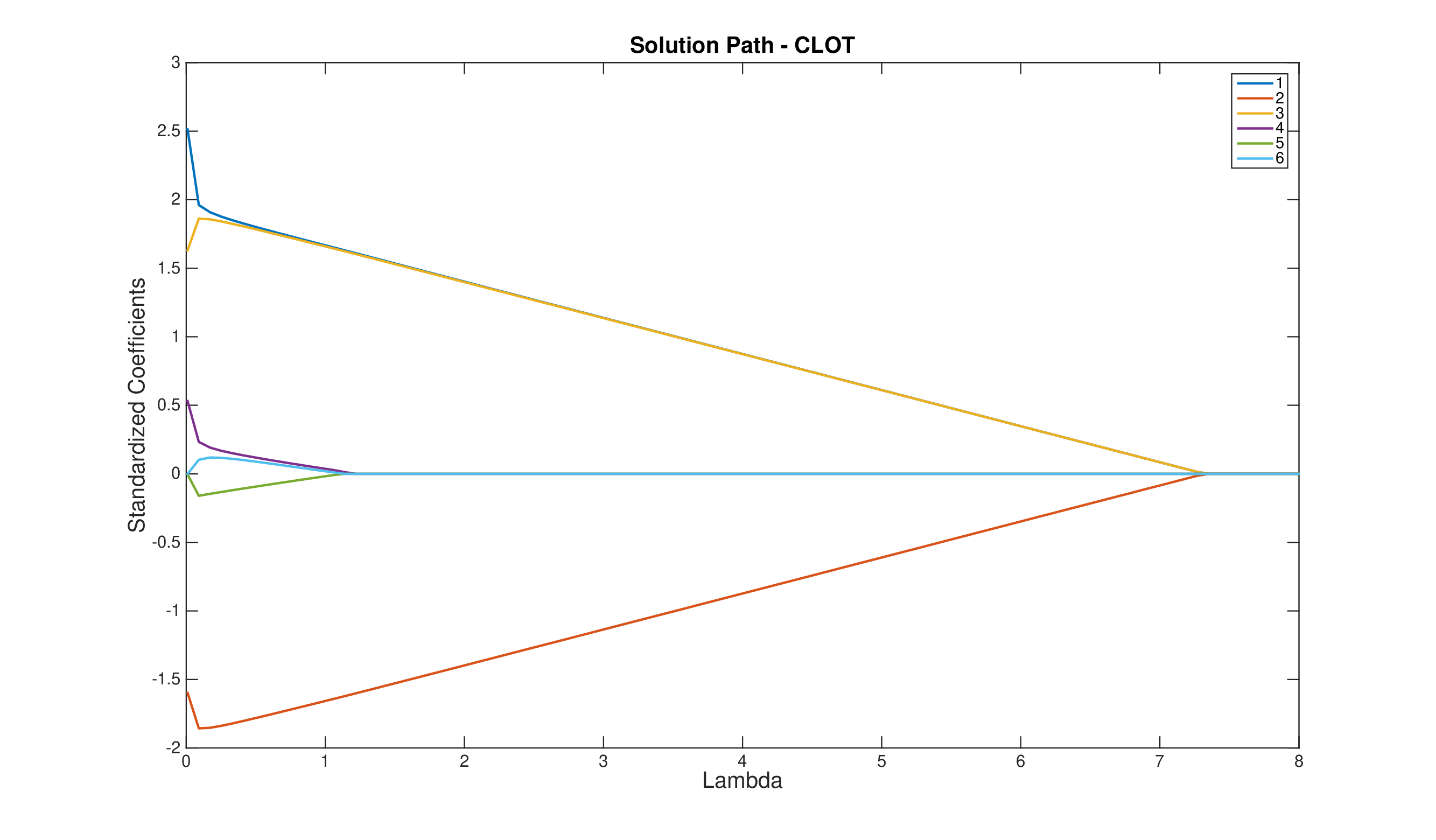}
\ec
\caption{Solution Paths for CLOT}
\label{fig:2}
\efig

\bfig
\bc
\includegraphics[width=150mm]{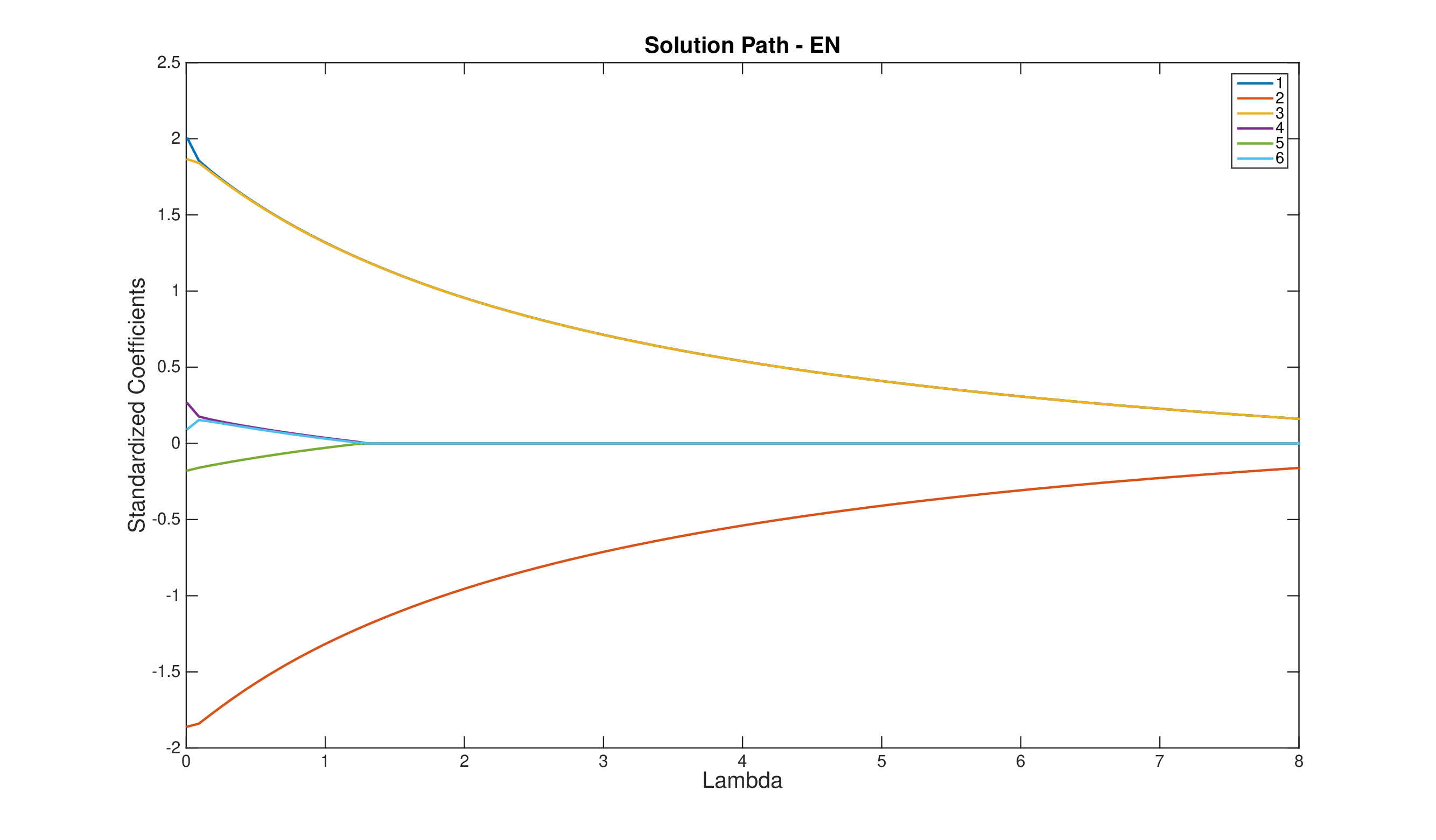}
\ec
\caption{Solution Path for EN}
\label{fig:3}
\efig

\bfig
\bc
\includegraphics[width=150mm]{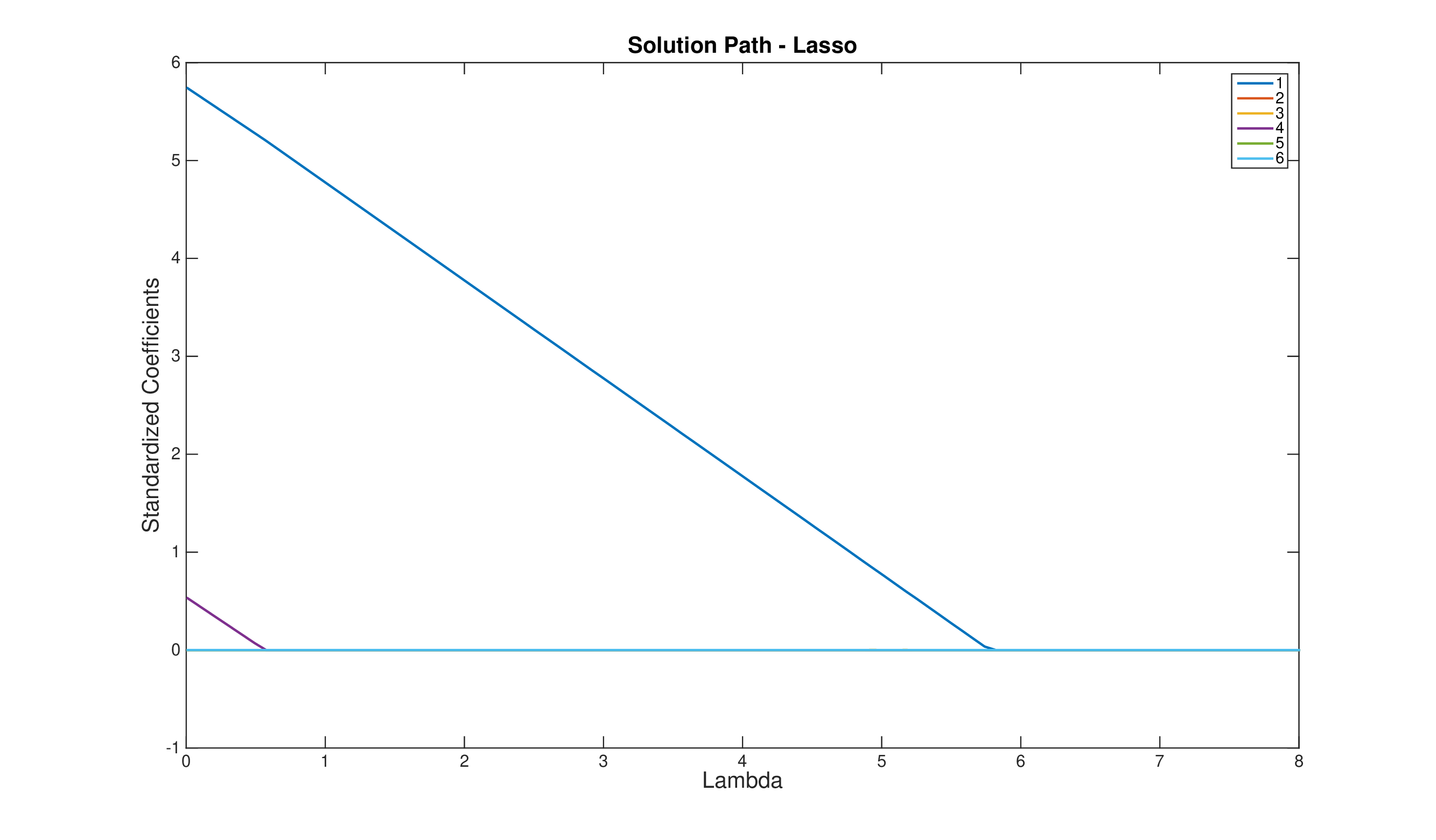}
\ec
\caption{Solution Path for LASSO}
\label{fig:4}
\efig

\subsection{Lack of Robust Sparse Recovery by EN} \label{sec:ex1}

In this subsection we illustrate Theorem \ref{thm:EN}.
In this set-up, $n = 4,000$ and $k = 3$.
The first three components of the vector $x$ are assigned values
at random using the {\tt Matlab} {\tt rand} function, which resulted in
$[ \ba{ccc} 0.8147 & 0.9058 & 0.1270 \ea ]$.
The remaining components of $x$ were set equal to zero.
We could have chosen not just the values but also the location of the
nonzero components at random; but this would have been just a permuation
of the above example.

Next, in order to achieve robust sparse recovery of order $k$,
following Theorem \ref{thm:CZ1} we needed to choose a $t \geq 4/3$
and then choose a matrix $A$ such that $A$ satisfied
the RIP of order $\lceil tk \rceil$ with constant $\d < \sqrt{(t-1)/t}$.
We chose $t = 1.5$, which resulted in $\lceil tk \rceil = 5$,
and $\d_5 < \sqrt{1/3} \approx 0.5774$.
We chose $\d_5 = 0.4$.
Therefore we had to choose an integer $m$ and a matrix $A \in \R^{m \times n}$
such that $A$ satisfied the RIP of order $5$ with constant $\d_5 \leq 0.4$.
Such a matrix was constructed using the deterministic procedure
suggested in \cite{Devore07}.
This required the choice of an integer $r \geq 2$
and a prime number $p$ such that 
\bd
p \geq \max \left\{ \frac{(\lceil tk \rceil -1 ) r} { \d } , n^{1/(r+1)}
\right\} = \max\{20, 15.87\} = 20 .
\ed
and led to a binary matrix of dimensions $p^2 \times p^{r+1}$.
The smallest prime number greater than $20$ is $p = 23$, and $m = p^2 = 529$.
Note that $m < n/4$.
The main advantages of the construction in \cite{Devore07} are that
(i) the construction is deterministic, (ii) the matrix $A$ is binary,
and (iii) only a fraction $1/p$ of the elements of $A$ are equal to
1, and the rest are equal to zero.
This makes computation very fast.

Once the matrix $A$ was chosen, we defined the measured vector as $y = Ax$;
that is, we did not introduce any measurement noise.
Then we computed estimates of $x$ using both CLOT and EN.
With $\d = 0.4$, the constant $\r$ defined in \eqref{eq:3111} becomes
$0.6551$, and the bound in \eqref{eq:MV1}, after substituting $g = 1$,
becomes $\mu < 0.2084$.
Therefore we chose $\mu = 0.2$, and defined
\bd
\xh_{{\rm CLOT}} = \argmin_z \nmcmu{z} \st Az = y ,
\ed
\bd
\xh_{{\rm EN}} = \argmin_z (1 - \mu) \nmm{z}_1 + \mu \nmeusq{z} \st Az = y ,
\ed
with the same value of $\mu = 0.2$ in both cases.
Then we replaced $x$ by $10^c x$ for $c = 1 , 2, 3, 4$ and recomputed
the estimates.
Because $m < n/4$, according to Theorem \ref{thm:51}, EN should fail as the
norm of the vector $x$ is increased.

The above constrained optimization problems were solved using the cvx
package in {\tt Matlab}, on an HP Pavilion laptop.
The actual results are shown in Table \ref{table:3}.
For compactness only the first three components of $\xh$ are shown.
As can be seen, when $c = 0$ or $c = 1$, both CLOT and EN
give the correct answer.
However, for larger values of $c$, the CLOT estimate simply got
multiplied by the same scale factor, whereas the EN estimate started
diverging from the true value with $c = 2$; the algorithm failed to converge
with $c = 4$.

% All computations were carried out in {\tt Matlab} using the cvx optimization
% package, on an HP Pavilion laptop.
% Clearly this is a ``toy'' example.
% Therefore it is noteworthy that EN failed even in this toy example.

\begin{table}[h]
\bc
\btab{|c|c|c|}
\hline
$c$ & $\xh_{{\rm CLOT}}$ & $\xh_{{\rm EN}}$ \\
\hline
0 &    0.8147 &  0.8147 \\
 &   0.9058 &  0.9058 \\
 &   0.1270 &  0.1270 \\
\hline
1 &    8.1472 &  8.1472 \\
 &   9.0579 &  9.0579 \\
  &  1.2699 &  1.2699 \\
\hline
 2 &  81.4724 & 24.1502 \\
 &  90.5792 & 31.8304 \\
 &  12.6987 &  4.7944 \\
\hline
3 &  814.7236 & 111.2433 \\
 & 905.7918 & 132.8940 \\
 & 126.9868 & 15.3818 \\
\hline
4 & $8.1472 \times 10^3$  &    NaN \\
 &    $9.0579  \times 10^3$ &    NaN \\
 &    $1.2699  \times 10^3$ &    NaN \\
\hline
\etab
\ec
\caption{Comparison of solutions of the CLOT and EN
algorithms as the unknown vector is scaled by successive powers of 10.
The CLOT output simply scales by the same factor, whereas the EN output
begins to be incorrect already when the scale factor is 100.
For a scale factor of $10^4$, the algorithm fails.}
\label{table:3}
\end{table}

\section{Discussion and Concluding Remarks}

In this paper we have introduced a new optimization formulation called
CLOT (Combined L-One and Two), wherein the regularizer is a convex
combination of the $\ell_1$- and $\ell_2$-norms.
This formulation differs from the Elastic Net (EN) formulation, in which the
regularizer is a convex combination of the $\ell_1$- and $\ell_2$-norm
\textit{squared}.
This seemingly simple modification has fairly significant consequences.
In particular, it is shown in this paper that the EN formulation
\textit{does not achieve} robust recovery of sparse vectors in the
context of compressed sensing, whereas the new CLOT formulation does so.
Also, like EN but unlike LASSO, the CLOT formulation achieves the
grouping effect, wherein coefficients of highly correlated columns of
the measurement (or design) matrix are assigned roughly comparable values.
It is noteworthy that LASSO does not have the grouping effect and EN
(as shown here) does not achieve robust sparse recovery.
Therefore the CLOT formulation combines the best features of both LASSO
(robust sparse recovery) and EN (grouping effect).

The CLOT formulation is a special case of another one called SGL
(Sparse Group LASSO) which was introduced into the literature
previously, but without any analysis
of either the grouping effect or robust sparse recovery
\cite{SFHT13}.
It is shown here that SGL achieves robust sparse recovery,
and also achieves a version of the grouping effect in that
coefficients of highly correlated columns of
the measurement (or design) matrix are assigned roughly comparable values,
\textit{if the columns belong to the same group}.

There are several papers in the literature that discuss LASSO-like
formulations for group sparsity; some of these are discussed here.
First, there is a companion paper by a subset of the present authors 
\cite{MV-Eren-Bounds16}, which studies the problem of robust sparse
recovery with SGL-like formulations, \textit{but with restrictions on
the group size}.
% The main difference between that paper and the present one is that
% the sparsity index $\s_k$ in the present paper is replaced
% by a group sparsity index, which provides an upper bound for the residual
% error $\nmeu{ \xh - x}$.
% However, the theory in that paper requires that the maximum size of each
% group
% $G_i$ in the partitioning of the index set $[n]$
% be no larger than $k$.
In contrast, in the present paper, there is no such restriction, which
is why the results derived here for the SGL formulation can be directly
applied to the CLOT formulation.
Second, for the case where several columns of the matrix $A$ are highly
correlated, \cite{BRGZ13} suggests a two-stage process whereby first
correlated columns are clustered, and second, a variant of LASSO is applied.
In a discussion of this paper, namely \cite{BW13}, all the LASSO variants
together with EN are run on various test data.
For the purposes of the present discussion, the salient observation is that
the EN formulation performed roughly as well -- no better and no worse --
compared almost all the LASSO variants.
Finally, in \cite{Geer14}, a general theory is presented whereby the
fully decomposable $\ell_1$-norm is replaced by a weakly decomposable norm,
and oracle bounds are derived for $\nmm { \xh - x}$ provided that the
index set $[n]$ is divided into an ``allowed set'' and its complement.
There is also some discussion of overlapping group decompositions.
Specifically, when an element of the index set $[n]$ appears in two groups,
the corresponding column of $A$ is simply replicated to remove the overlap.
However, if two columns of $A$ are identical (and normalized), then 
the RIP constant $\d_2$ would equal zero, as would $\d_k$ for $k \geq 2$.
Therefore, if the case of overlapping groups is handled in this manner,
then any analysis based on RIP would be infructuous.
The above discussion is quite cursory, and the reader may consult
these references for fuller details.

% There are several promising avenues for future research.
% First, the results presented here generalize the earlier bound
% $\d_{2k} < \sqrt{2} - 1$, proven in \cite{Candes08}.
% However, thanks to \cite{CZ14}, it is now known that the best possible
% bound is $\d_{2k} < 1/\sqrt{2}$.
% Therefore it would be worthwhile to explore the extension of the SGL and
% CLOT formulations so that the improved bound could be recovered as a special
% case.
% Another important topic is group sparsity with overlapping groups.
% In its current form SGL assumes that 
% the various groups in the partition are non-overlapping.
It would be worthwhile to study the behavior of SGL with overlapping groups.
There are variants of SGL with overlapping groups, provided they
satisfy some additional constraints; see \cite{Jenetton-et-al11,OJV-Over-GL11}
for example.
However, in a companion paper \cite{MV-Eren-Bounds16}, it is shown
that the assumptions of \cite{Jenetton-et-al11,OJV-Over-GL11}
still enforce a nonoverlap constraint, but in a nonobvious fashion.
As pointed out in the previous paragraph, the approach of introducing
duplicate columns into $A$ to eliminate overlap would render
any analysis based on RIP impossible.
Thus a suitable approach remains to be discovered.
% the SGL formulation with overlapping groups
% also achieves robust sparse recovery.
% However, the cardinality of the groups is still restricted to be
% smaller than $k$, the integer with respect to which one computes
% the sparsity index.
% No such restriction applies in the present paper.
% It would therefore be of interest to extend the results in
% the present paper regarding the SGL formulation
% without any \textit{a priori\/} restriction on group sizes, to the
% case of overlapping groups.

\section*{Acknowledgements}

The authors thank two anonymous reviewers and the Handling Editor
for their careful reading
of earlier versions of this paper and many helpful comments.
Theorem \ref{thm:31} and its proof were supplied by one of the reviewers.
This research was supported by the National Science Foundation under Award
\#ECCS-1306630 and by the Cancer Prevention and Research Institute of Texas
(CPRIT) under Award No.\ RP140517, and by the Department of Science
and Technology, Government of India.

\section*{Appendix: Proof of Theorem \ref{thm:SGL-Bounds}}

\textbf{Proof:}
Hereafter we write $z^i$ instead of $z_{G_i}$ in the interests of brevity.

Define $h = \hat{x} - x \in \mathbb{R}^n$.
From the definition of the estimate, we have that
\bd
\nmm{x}_{SGL,\mu} \geq \nmm{ \hat{x} }_{SGL,\mu} = \nmm{x + h}_{SGL,\mu}
\ed
% Now write this as
From the definition of the SGL norm, this expands to
% \begin{eqnarray*}
\bd
(1 - \mu) \nmm{x}_1 + \mu \sum_{j = 1}^{g}\nmeu{x^j} \geq 
(1 - \mu) \nmm{x + h}_1 \\
+ \mu \sum_{j = 1}^{g} \nmeu{x^j + h^j} .
\ed
% \end{eqnarray*}
This can be rearranged as
\be\label{eq:1s}
(1 - \mu) ( \nmm{x}_1 - \nmm{x + h}_1 ) 
+ \mu \sum_{j=1}^g ( \nmeu{ x^j } - \nmeu{ ( x^j + h^j } ) \geq 0 .
\ee
We will work separately on each of the two terms separately.
First, by the triangle inequality, we have that
\bd
\nmm{x^j + h^j}_2 \geq \nmm{x^j}_2 - \nmm{h^j}_2 ,
\mbox{ or }
\nmm{h^j}_2 \geq \nmm{x^j}_2 - \nmm{x^j + h^j}_2 .
\ed
As a consequence,
\bd
\sum_{j = 1}^{g} \nmm{h^j}_2 \geq \sum_{j = 1}^{g} [\nmm{x^j}_2 - \nmm{x^j + h^j}_2]
\ed
From Schwarz' inequality, we get
\bd
\sum_{j = 1}^{g} \nmm{h^j}_2 \leq \sqrt{g} \nmeu{h} \leq \sqrt{g} \nmm{h}_1
\leq \sqrt{g} ( \nmm{h_S}_1 + \nmm{ h_{S^c} }_1 ) ,
\ed
for any $S \seq [n]$.
Combining everything gives
\be\label{eq:1t}
\sum_{j = 1}^{g} [\nmm{x^j}_2 - \nmm{x^j + h^j}_2]
\leq \sqrt{g} ( \nmm{h_S}_1 + \nmm{ h_{S^c} }_1 ) , 
\ee
for any subset $S \seq [n]$.
Second, for any subset $S \seq [n]$, the decomposability of $\nmm{\cdot}_1$
implies that
\bd
\nmm{x}_1 = \nmm{x_S} + \nmm{ x_{S^c} }_1 ,
\ed
while the triangle inequality implies that
\begin{eqnarray*}
\nmm{ x + h }_1 & = & \nmm{ x_S + h_S }_1 + \nmm{ x_{S^c} + h_{S^c} }_1 \\
& \geq & \nmm{x_S}_1 - \nmm{h_S}_1 + \nmm{ h_{S^c} }_1 - \nmm{ x_{S^c} }_1 .
\end{eqnarray*}
Therefore
\be\label{eq:1u}
\nmm{x}_1 - \nmm{ x + h }_1 \leq \nmm{h_S}_1 - \nmm{ h_{S^c} }_1 + 
2 \nmm{ x_{S^c} }_1 .
\ee
If we now choose $S$ to be the set corresponding to the $k$ largest
elements of $x$ by magnitude, then
\bd
\nmm{ x_{S^c} }_1 = \s_k(x,\nmm{\cdot}_1) =: \s_k .
\ed
With this choice of $S$, \eqref{eq:1u} becomes
\be\label{eq:1w}
\nmm{x}_1 - \nmm{ x + h }_1 \leq \nmm{h_S}_1 - \nmm{ h_{S^c} }_1 + 2 \s_k .
\ee
Substituting the bounds \eqref{eq:1t} and \eqref{eq:1u} into \eqref{eq:1s}
gives
\bd
0 \leq \mu \sqrt{g} ( \nmm{h_S}_1 + \nmm{ h_{S^c} }_1 ) 
+ (1 - \mu) ( \nmm{h_S}_1 - \nmm{ h_{S^c} }_1 + 2 \s_k ) . 
\ed
Now recall the definition of the constant $\g$ from \eqref{eq:MV3}.
Using this definition, the above inequality can be rearranged as
\bd
0 \leq \g ( \nmm{h_S}_1 + \nmm{ h_{S^c} }_1 )
+ ( \nmm{h_S}_1 - \nmm{ h_{S^c} }_1 + 2 \s_k ) .
\ed
and equivalently as
\be\label{eq:2s}
(1 - \g) \nmm{ h_{S^c} }_1 - (1+\g) \nmm{ h_S }_1 \leq 2 \s_k .
\ee
This is the first of two equations that we need.

Now we derive the second equation.
From Theorem \ref{RIP_A}, we know that the matrix $A$
satisfies the $l_2$-robust null space property, namely \eqref{eq:D}.
An application of Schwarz' inequality shows that
\bd
\nmm{h_S}_1 \leq \r \nmm{h_{S^c} }_1 + \t  \nmeu{ Ah } .
\ed
However, because both $x$ and $\xh$ are feasible for the optimization
problem in \eqref{eq:3112}, we get
\bd
\nmeu{ Ah } = \nmeu{ A \xh + \eta - (A x + \eta) } \leq 2 \e .
\ed
Substituting this bound for $\nmeu{Ah}$ gives us the second equation
we need, namely
\bd
\nmm{h_S}_1 \leq \r \nmm{h_{S^c} }_1 + 2 \t \e ,
\ed
or equivalently
\be\label{eq:1x}
- \r \nmm{h_{S^c} }_1 + \nmm{h_S}_1 \leq 2 \t \e .
\ee

The two inequalities \eqref{eq:2s} and \eqref{eq:1x} can be written compactly as
\be\label{mat_vecs}
M \begin{bmatrix}
\nmm{h_{S^c} }_1 \\
\nmm{h_S}_1
\end{bmatrix} \leq  
\begin{Bmatrix}
\begin{bmatrix}
2 \\
0
\end{bmatrix} \s_k +
\begin{bmatrix}
0 \\
2 \t
\end{bmatrix} \epsilon
\end{Bmatrix} ,
\ee
where the coefficient matrix $M$ is given by
\bd
M = \begin{bmatrix}
(1 - \g)  &&  -(1 + \gamma) \\
-\r  && 1
\end{bmatrix}
\ed
The matrix $M$ has positive diagonal elements (recall that 
$\g < 1$), and negative off-diagonal elements.
Therefore, if $\det(M) > 0$, then every element of $M^{-1}$ is positive,
in which we can multiply both sides of \eqref{mat_vecs} by $M^{-1}$.
Now 
\bd
\det(M) = 1 - \g -\r (1 + \g ) > 0 \iff \r < \frac{1-\g}{1+\g} .
\ed
Recall the definition of $\g$ from \eqref{eq:MV3}.
Now routine algebra shows that
\bd
\r < \frac{1-\g}{1+\g} \iff \r < \frac{1 - \mu \sqrt{g} }{1 + \mu \sqrt{g} }
\iff \mu < \frac{1-\r}{\sqrt{g}(1+\r)} ,
\ed
which is precisely \eqref{eq:MV1}.
% However, this bound is satisfied, in view of \eqref{eq:MV4}.
% The next challenge is to convert this bound on $\r$ into a bound on the
% RIP constant $\d$.
% We will do that last, but suppose $\d$ is small enough that $\r < \rbar$.
Thus we can multiply both sides of \eqref{mat_vecs} by $M^{-1}$, which gives
\bd
\begin{bmatrix}
\nmm{h_{S^c} }_1 \\
\nmm{h_S}_1
\end{bmatrix} \leq 
\frac{1}{\det(M)}  \begin{bmatrix}
1  && (1 + \g) \\
\\
\r && (1 - \gamma)
\end{bmatrix}
\begin{Bmatrix}
\begin{bmatrix}
2 \\
0
\end{bmatrix} \s_k +
\begin{bmatrix}
0 \\
2 \t
\end{bmatrix} \e
\end{Bmatrix} .
\ed
Clearing out the matrix multiplication gives
\be\label{eq:1y}
\begin{bmatrix}
\nmm{h_{S^c} }_1 \\
\nmm{h_S}_1
\end{bmatrix} \leq
\frac{1}{\det(M)}
\left\{ \begin{bmatrix}
2 \\
2 \r
\end{bmatrix} \s_k +
\begin{bmatrix}
2 \t (1 + \g) \\
2 \t (1 - \g) 
\end{bmatrix} \e
\right\} .
\ee
Now the triangle inequality states that
\bd
\nmm{h}_1 \leq \nmm{h_{S^c} }_1 + \nmm{h_S}_1 = \begin{bmatrix}
1 && 1
\end{bmatrix}
\begin{bmatrix}
\nmm{h_{S^c} }_1 \\
\nmm{h_S}_1
\end{bmatrix} 
\ed
Substituting from \eqref{eq:1y} gives
\bd
\nmm{h}_1 \leq \frac{1}{\det(M)} [ 2 (1+\r) \s_k + 4 \t \e ] .
\ed
By substituting that $\det(M) = (1 - \g) - (1+\g) \r$, we get
\eqref{eq:Bs} with the constants as defined in \eqref{eq:Cs}.

To prove \eqref{eq:As}, suppose $p \in (1,2]$.
This part of the proof closely follows that of \cite[Theorem 4.22]{FR13},
except that we provide explicit values for the constants.
Let $\L_0$ denote the index set of the $k$ largest components of $h$
by magnitude.
Then
\bd
\nmp{h} \leq \nmp{ \hlo } + \nmp{ \hloc } .
\ed
We will bound each term separately.
First, by \cite[Theorem 2.5]{FR13} and \eqref{eq:Bs}, we get
\be\label{eq:Es}
\nmp{ \hloc } \leq \frac{1}{ k^{1 - 1/p} } \nmm{h}_1
\leq \frac{1}{ k^{1 - 1/p} } ( C \s_k + D \e ) .
\ee
Now we apply in succession H\"{o}lder's inequality, the robust null space
property, the fact that $\nmeu{Ah} \leq 2\e$, and \eqref{eq:Bs}.
This gives
\beq
\nmp{ \hlo } & \leq & k^{1/p - 1/2} \nmeu{ \hlo } \nonumber \\
& \leq & \frac{ k^{1/p - 1/2} }{ \sqrt{k} } [ \r \nmm{ \hloc }_1 
+ \t \nmeu{ Ah } ] \nonumber \\
& \leq & \frac{1}{ k^{1 - 1/p} } [ \r ( C \s_k + D \e ) + 2 \t \e ]
\nonumber \\
& = & \frac{1}{ k^{1 - 1/p} } [ \r C \s_k + ( \r D + 2 \t ) \e ] .
\label{eq:Fs}
\eeq
Adding \eqref{eq:Es} and \eqref{eq:Fs} gives \eqref{eq:As}.

\bibliography{Comp-Sens}

\bibliographystyle{IEEEtran}

\end{document}